\documentclass[conference]{IEEEtran}
\IEEEoverridecommandlockouts

\usepackage{cite, hyperref}
\usepackage{amsmath,amssymb,amsfonts}
\usepackage{float}
\usepackage{algorithm}
\usepackage{algpseudocode}
\usepackage{graphicx}
\usepackage{textcomp}
\usepackage{xcolor}
\usepackage{amsthm}
\newtheorem{theorem}{Theorem}[]

\newtheorem{lemma}{Lemma}[]
\newtheorem{proposition}{Proposition}[]
\newtheorem{remark}{Remark}[theorem]
\newtheorem{definition}{Definition}
\newcommand{\remove}[1]{}
\def\BibTeX{{\rm B\kern-.05em{\sc i\kern-.025em b}\kern-.08em
    T\kern-.1667em\lower.7ex\hbox{E}\kern-.125emX}}
\begin{document}

\title{Sequence prediction under a lying oracle\\
}

\author{\IEEEauthorblockN{Puspabeethi Samanta}
\IEEEauthorblockA{\textit{Department of Electrical Engineering} \\
\textit{IIT Bombay}\\
puspabeethi.samanta@gmail.com}
\and
\IEEEauthorblockN{Nikhil Karamchandani}
\IEEEauthorblockA{\textit{Department of Electrical Engineering} \\
\textit{IIT Bombay}\\
nikhilk@ee.iitb.ac.in}
\and
\IEEEauthorblockN{Jayakrishnan Nair}
\IEEEauthorblockA{\textit{Department of Electrical Engineering} \\
\textit{IIT Bombay}\\
jayakrishnan.nair@ee.iitb.ac.in}
}

\maketitle

\begin{abstract}
We consider the problem of sequential prediction of an $m$-ary sequence, where at each epoch, (i) the environment selects an outcome from an $m$-ary alphabet, (ii) the learner selects a probability distribution over the same alphabet (unaware of the outcome generated by the environment), and finally, (iii) the learner incurs a cost that depends on the probability assigned to the outcome. The cost function we consider captures the complexity of predicting the outcome generated by the environment, in a scenario where the aforementioned prediction is performed via comparative queries to a lying oracle. We consider both stochastic and adversarial environments, propose algorithms for both settings, and establish logarithmic upper bounds on their regret.
\end{abstract}


\section{Introduction}
\label{Sec:Intro}

Sequential prediction of discrete outcomes is a central problem in information theory, statistics, and online learning. In this setting, at each round $t$, a learner predicts a probability distribution $p_t$ over a finite alphabet, and then observes an outcome $y_t$. The performance of the learner is measured through a loss function $l(p_t,y_t)$, and different choices of loss functions lead to different notions of optimality and different algorithmic behaviors. Common examples include logarithmic loss, squared loss, and absolute loss \cite{b7,vovk2001game,gneiting2007strictly}. 

Among these, the logarithmic loss $\log \frac{1}{p_t(y_t)}$ plays a particularly prominent role due to its connections to coding, gambling, and universal prediction. In particular, cumulative log-loss corresponds to the codelength of the observed sequence under the assigned probabilities, and minimizing regret under log-loss is equivalent to achieving optimal redundancy in universal coding \cite{b5,b6,b7}. This connection has led to a rich body of work on sequential probability assignment, including classical estimators such as the add-$\beta$ family, which includes the Laplace ($\beta=1$) and the Krichevsky--Trofimov  ($\beta=1/2$) estimators, and which achieve optimal or near-optimal regret under log-loss \cite{b6,b7,b8,b10}.

An alternative interpretation of log-loss arises from the classical twenty questions problem \cite{renyi1961,cover2012elements}, a sequential game for identifying an unknown symbol by querying an oracle. In the noiseless setting, if a symbol $x$ is drawn from a distribution~$\mu$, then the number of Yes/No queries required to identify~$x$ scales, up to lower-order terms, as $\log (1/\mu(x))$, and the expected number of queries is given by the entropy $H(\mu)$. This establishes a direct connection between optimal querying strategies and sequential probability assignment under log-loss, when the underlying distribution is a priori unknown \cite{b3}.

A natural extension of this problem, known as the R\'enyi--Ulam game \cite{b11}, allows the oracle to lie a bounded number of times. In this setting, the presence of lies increases the query complexity required to identify the unknown symbol. The survey paper \cite{pelc} of Pelc provides a detailed review of the various settings of this game in the presence of lies/errors. Recent results show that allowing up to $k$ lies introduces an additional overhead of order $k \log \log (1/\mu(x))$ in the number of queries, beyond the classical $\log (1/\mu(x))$ term \cite{b1,b11}. 

Motivated by this connection, we consider the following perturbed loss function. At each round $t$, after predicting a distribution $p_t$ and observing outcome $y_t$, the learner incurs loss
\[
l(p_t,y_t)
=
\log \frac{1}{p_t(y_t)}
+
k \log \log \left( \frac{1}{p_t(y_t)} + c \right)
\]
where $k \geq 0$ represents the number of allowed lies and $c \geq 1$ is a constant. The first term corresponds to the classical log-loss, while the second term captures the additional complexity induced by unreliable oracle responses.

We study sequential probability assignment under this perturbed loss in both stochastic and adversarial settings. In the stochastic setting, outcomes are generated i.i.d.\ from an unknown distribution $p$, and performance is measured through the expected \textit{regret}, i.e., the cumulative loss relative to predicting the true distribution $p$ at every round. We first analyze the performance of the add-$\beta$ family of estimators under this loss, which as we mentioned before, is central to the theory of universal prediction and achieves optimal or near-optimal regret under log-loss \cite{b6,b7,b8,b10}. 

We then turn to the adversarial setting, where the outcome sequence is arbitrary and performance is measured relative to the best fixed distribution in hindsight. To address this setting, we consider the Exponentially Weighted Online Optimizer (EWOO) \cite{b4, b7}. We show that the perturbed loss function is exp-concave, which allows us to apply EWOO and obtain regret guarantees.

Finally, we complement our theoretical analysis with numerical experiments that illustrate the behavior of the proposed loss and algorithms, and highlight the dependence of regret on the horizon $n$, the parameter $k$ as well as the choice of estimator. 

\section{Problem Formulation}

\subsection{General Model}
We consider the problem of sequential prediction of $m$-ary sequences. More concretely, in each round $t \in [n] := \{1,2,\ldots,n\}$, the environment generates a symbol $y_t \in \mathcal{Y} = \{1,2,\ldots,m\}$, which is apriori unknown to the learner. Based on the history of prior outcomes $y_1,y_2, \ldots, y_{t-1}$, the learner is asked to assign a probability mass function $p_t = (p_t(1), p_t(2),\ldots,p_t(m))$ over $\mathcal{Y}$. The outcome $y_t$ is then revealed to the learner, who incurs a loss $l(p_{t}, y_{t})$. This marks the completion of one round of the learning problem, and this process is continued similarly for subsequent rounds.

\subsection{Stochastic regret}
In the stochastic setting, the environment generates $y_{t}$ in an i.i.d. fashion as per a fixed distribution $p$ (apriori  unknown to the learner) over $\mathcal{Y}, \forall \ t \in [n]$. At the end, the performance of the learner is measured in terms of the cumulative expected \textit{regret} which is given by 

\begin{align}
    R_{n}^{S} \ &\triangleq \ \sum_{t=1}^{n} E_{y_{t} \sim p}\ E_{y_{\tau}|_{\tau=1}^{t-1}}\ [l(p_{t}, y_{t}) -  l(p, y_{t})] \\
    &= \sum_{t=1}^{n}\sum_{j=1}^{m}\  p(j)E_{y_{\tau}|_{\tau=1}^{t-1}}\  [l(p_{t}, j) -  l(p, j)]
\end{align}

\subsection{Worst-case regret}
In this setting, the outcome sequence $y_1,y_2,\ldots,y_n$ is arbitrary (and can even be adversarially determined). For each such sequence, the regret of the learner is measured by the gap between its cumulative incurred loss and that of the `best' distribution in hindsight, i.e., 
\begin{align}
     R_{n}^{A}(y_{t}|_{t=1}^{n}) &\triangleq \sum_{t=1}^{n} l(p_{t}, y_{t}) - \underset{p \in \mathcal{D}}{\textnormal{min}}\sum_{t=1}^{n} l(p, y_t) \\
     &=   \sum_{t=1}^{n}\ [l(p_{t}, y_{t}) - l(p^{*}(y_{t}|_{t=1}^{n}), y_{t})]
 \end{align}
 where $\mathcal{D}$ denotes the set of all $m$-ary probability distributions and $p^{*}(y_{t}|_{t=1}^{n}) = \underset{p \in \mathcal{D}}{\textnormal{arg min}}\ \sum_{t=1}^{n} l(p, y_{t})$. 
The overall regret of the learner is then defined by taking a supremum over all possible sequences in $\mathcal{Y}^{n}$, as detailed next.
\begin{align}
     R_{n}^{A} &\triangleq \underset{y_{t}|_{t=1}^{n} \in \mathcal{Y}^{n}}{\textnormal{sup}}R_{n}^{A}(y_{t}|_{t=1}^{n})\\
    &= \underset{y_{t}|_{t=1}^{n} \in \mathcal{Y}^{n}}{\textnormal{sup}}\sum_{t=1}^{n}\ [l(p_{t}, y_{t}) - l(p^{*}(y_{t}|_{t=1}^{n}), y_{t})]
\end{align}

\subsection{Loss Function}
As highlighted in the introduction, in this paper we focus on a particular loss function which is motivated by the problem of guessing with lies \cite{b1}, given as follows for any $p \in \mathcal{D}, y \in~[m]$:
\begin{equation}
    l(p, y) := f(p(y)) = \log\frac{1}{p(y)} + k\log\log\left(\frac{1}{p(y)}+c\right)
    \label{eqn:lossfn}
\end{equation}
where $k \geq 0, c \geq 1$ are constants; $k$ is an integer, and $c$ a real number. Note that for $k=0$, this matches the widely studied log loss function \cite{b5}.

\section{Stochastic Setting}

For this setting, we consider the popular \textit{add-$\beta$ estimator} \cite{b6,b7},
which coincides with the Bayesian posterior  distribution under a symmetric
Dirichlet prior. It can also be expressed as the convex combination of a uniform prior and the empirical estimate, as shown below.
\begin{equation}
    p_{t}(j) = \frac{\beta + N_{j, t-1}}{t-1+m\beta} = (1 - \lambda_{t})\frac{1}{m} + \lambda_{t}\bar{p}_{j,t-1}, \ \forall \ j \in [m]
    \label{eqn:addbeta}
\end{equation}
with $\lambda_{t} = 1 - \frac{m\beta}{t-1+m\beta}$ and $\bar{p}_{j, t-1}  = \frac{N_{j, t-1}}{t-1}$ where $N_{j,t-1}$ denotes the number of occurrences of symbol $j$ in
$y_1,\dots,y_{t-1}$.

The following theorem gives a logarithmic upper bound on the cumulative regret $R_{n}^{S}$ incurred by the add-$\beta$-estimator over~$n$ rounds in the stochastic setting.

\begin{theorem}
    For the sequential prediction problem of an $m$-ary sequence $(y_{1}, \cdots, y_{n}) \in \mathcal{Y}^{n}$, where each symbol $y_{t}, t \in [n]$ is generated i.i.d. as per a fixed distribution $p$, and the loss function is as specified in \eqref{eqn:lossfn}, the add-beta estimator incurs logarithmic cumulative regret given by 
\begin{equation*}\begin{split}
R_{n}^{S} \leq \Bigg(m\beta \ \textnormal{log} \ m + \frac{4m}{\ln 2} + & \frac{k}{(\ln 2)\ l} \left(m^{2}\beta + \frac{8m}{ p_{min}}\right)\Bigg) \\ & \ln \bigg( \frac{n-1}{\lfloor m\beta \rfloor}+1\bigg) + O(1)
\end{split}
\end{equation*}
    where $l = \left(\frac{1}{p_{max}} + c\right)\textnormal{ln}\left(\frac{1}{p_{max}} + c\right),  
 p_{max} = \underset{j \in \mathcal{Y}}{\textnormal{max}}\ p(j),$ $ \textnormal{and} \ p_{min} = \underset{j \in \mathcal{Y}}{\textnormal{min}}\ p(j)$.
\end{theorem}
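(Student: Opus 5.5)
The plan is to split the per-round regret into a log-loss part and a $k\log\log$ part, and to bound each one separately. For a fixed round $t$ write $q_j:=p_t(j)$. The regret contribution is then
\[
\sum_j p(j)\,E\Big[\log\tfrac{p(j)}{q_j}\Big] + k\sum_j p(j)\,E\Big[\log\log\big(\tfrac1{q_j}+c\big)-\log\log\big(\tfrac1{p(j)}+c\big)\Big].
\]
The first sum is $E[D(p\|p_t)]$, which is the classical add-$\beta$ log-loss term.

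For the log-loss term I will use the convex-combination form \eqref{eqn:addbeta}. Since $q_j\ge (1-\lambda_t)/m+\lambda_t\bar p_{j,t-1}$, joint convexity of KL in its second argument gives
\[
D(p\|p_t)\le (1-\lambda_t)D(p\|u)+\lambda_t D(p\|\bar p_{t-1}),
\]
but the empirical term can be infinite, so this is only heuristic. Instead I will bound $D(p\|p_t)\le \chi^2(p\|p_t)/\ln 2=\frac1{\ln2}\big(\sum_j p(j)^2/q_j-1\big)$ and expand $q_j$ around $p(j)$. Writing $q_j=p(j)+\delta_j$ with $\delta_j=\frac{\beta-m\beta p(j)+(N_{j,t-1}-(t-1)p(j))}{t-1+m\beta}$, we have $E[\delta_j^2]\le \frac{(t-1)p(j)+m^2\beta^2}{(t-1+m\beta)^2}$. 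Using $p(j)^2/q_j = p(j)-\delta_j+\delta_j^2/q_j$, it remains to control $E[\delta_j^2/q_j]$. Because $q_j\ge \beta/(t-1+m\beta)$ always, I will split on whether $N_{j,t-1}\ge (t-1)p(j)/2$ and use a Chernoff bound for the bad event. That event has exponentially small probability times a polynomial factor, so it contributes $O(1)$ after summing over $t$. On the good event $q_j\gtrsim p(j)/2$, which gives $E[\delta_j^2/q_j]\lesssim 2/(t-1+m\beta)$. Summing over $j$ and $t$ gives $\frac{4m}{\ln 2}\sum_t\frac1{t-1+m\beta}$. The first few rounds, where $p_t$ is close to uniform, are bounded directly by $\log(1/q_j)\le\log m+\dots$, and these produce the $m\beta\log m$ term. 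I will convert $\sum_{t}\frac{1}{t-1+m\beta}$ into $\ln\big(\frac{n-1}{\lfloor m\beta\rfloor}+1\big)+O(1)$ by comparing with an integral, using $m\beta\ge\lfloor m\beta\rfloor$.

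For the $\log\log$ term, $g(x)=\log\log(x+c)$ is concave and increasing, so
\[
g(1/q)-g(1/p)\le g'(1/p)\big(\tfrac1q-\tfrac1p\big)=\frac{1}{\ln 2\,(1/p+c)\ln(1/p+c)}\Big(\tfrac1q-\tfrac1p\Big).
\]
The factor $(1/p+c)\ln(1/p+c)$ is increasing in $1/p$, so it is at least $l$; this is where $l$ enters. Since the bracket can be negative, I will bound its positive part instead. I then need $\sum_j p(j)E[(1/q_j-1/p(j))^+]$, and $\frac1{q_j}-\frac1{p(j)}=\frac{-\delta_j}{p(j)q_j}$. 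Again I split on the good event: there $1/q_j\le 2/p(j)$ up to the prior term. The linear part $-E[\delta_j]/p(j)$ equals $\frac{m\beta p(j)-\beta}{p(j)(t-1+m\beta)}$, and summed with weight $p(j)$ this gives at most $\frac{m^2\beta}{t-1+m\beta}$ after crude bounding. The quadratic part $E[\delta_j^2]/(p(j)q_j)$ gives $\frac{8}{p_{min}}\cdot\frac{1}{t-1+m\beta}$ per symbol, so $\frac{8m}{p_{min}}$ after summing over $j$. The bad event again contributes $O(1)$ by Chernoff, because $1/q_j\le (t-1+m\beta)/\beta$ is only polynomial.

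The main obstacle is the careful handling of $E[1/q_j]$-type terms on low-count events, which is needed to make them summable to $O(1)$. I will also need to track the constants so that they match $4m$ and $8m/p_{min}$. Finally, I will add the two parts, multiply the second by $k/((\ln2)\,l)$, and collect the harmonic sums into the stated logarithm.
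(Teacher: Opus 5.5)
\textbf{The gap.} The step that fails is in the $\log\log$ part, where you write ``since the bracket can be negative, I will bound its positive part instead.'' Passing to the positive part destroys the first-order cancellation that makes this term $O(1/t)$.

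To see this, note that $\frac{1}{q_j}-\frac{1}{p(j)}=-\frac{\delta_j}{p(j)^2}+\frac{\delta_j^2}{p(j)^2q_j}$ holds exactly. Hence $\big(\frac{1}{q_j}-\frac{1}{p(j)}\big)^+\ge(-\delta_j)^+/p(j)^2$. Next, $E[(-\delta_j)^+]=\tfrac12\big(E|\delta_j|-E[\delta_j]\big)$ is of order $\sqrt{p(j)(1-p(j))/t}$, because $N_{j,t-1}$ fluctuates on the scale $\sqrt{t}$ while $E[\delta_j]=O(1/t)$. So $E\big[(1/q_j-1/p(j))^+\big]\asymp t^{-1/2}$, and this route can only give an $O(\sqrt n)$ bound on the $k$-term.

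The computation you then sketch, via the linear part $-E[\delta_j]$, is for the signed quantity. It therefore does not bound the positive part you said you need.

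\textbf{The repair.} The positive part is unnecessary. The coefficient $g'(1/p(j))$ is deterministic, so take expectations first: $E[g(1/q_j)-g(1/p(j))]\le g'(1/p(j))\,E[1/q_j-1/p(j)]$.

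Your good/bad-event expansion then gives $E[1/q_j-1/p(j)]\le U_j$, where $U_j=\frac{m\beta}{p(j)(t-1+m\beta)}+O\big(\frac{1}{p(j)^2(t-1+m\beta)}\big)$ plus summable terms. Two points here:
\begin{itemize}
\item The linear term is $-E[\delta_j]/p(j)^2$, not $-E[\delta_j]/p(j)$.
\item You drop its nonpositive piece $-\frac{\beta}{p(j)^2(t-1+m\beta)}$, so that each $U_j$ is nonnegative.
\end{itemize}
Only after this should you replace $g'(1/p(j))$ by $1/((\ln 2)\,l)$. This is how the paper proceeds: it bounds $E[C]$ and $E[D]$ by nonnegative quantities before substituting $l$. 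With this repair the rest of your plan goes through.

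\textbf{Your log-loss route versus the paper's.} Your $\chi^2$ treatment of the log-loss term is correct and genuinely different from the paper's. The paper
\begin{itemize}
\item isolates the event $N_{j,t-1}=0$ (the term $A$),
\item applies Jensen to $-\log$ over the uniform/empirical mixture, and
\item imports bounds of Arnal et al.\ on $E[-\mathbb{I}(\bar p_{j,t-1}>0)\log(\bar p_{j,t-1}/p(j))]$.
\end{itemize}
Your exact identity $\chi^2(p\|p_t)=\sum_j\delta_j^2/q_j$, together with $q_j\ge\beta/(t-1+m\beta)$, handles all counts at once and needs no external lemmas. It yields coefficient $2m/\ln 2$ with no $m\beta\log m$ term, which still sits inside the stated bound.

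In the paper, the $m\beta\log m$ term comes from the Jensen split at every round, not from the first few rounds, which contribute only $O(1)$.
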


The proof of Theorem 1 is documented in Section VII-A. Note that the bound scales logarithmically with the time horizon $n$, and linearly with the support size $m$ of the underlying distribution. This is identical to the regret behaviour for the widely studied log-loss function \cite{b5,b6}, which corresponds to $k =0$ in the loss model studied here. Also, note that the regret grows linearly with $k$, which represented the number of permitted lies in the query model mentioned in Section~\ref{Sec:Intro}, which served as a motivation for the loss function studied here.

\section{Adversarial Setting}
In this section, we consider the setting where the outcome sequence is arbitrary. For this setting, we employ an estimator called the \emph{Exponentially Weighted Online Optimizer (EWOO)}, which is known to achieve a cumulative regret of $O(\log n)$ when the  loss function is exponentially concave \cite{b4}. This estimator generates probability estimates by taking a weighted average across the simplex, wherein the weights are assigned based on the exponential of the cumulative loss in hindsight.

We begin by showing that the loss function $f(p(y_{t}))$, as given in section II-D, is not only a convex function in $p(y_{t}) \in (0, 1]$, but also an $\alpha$-exponentially-concave function for a suitable choice of $\alpha$.\\

\begin{definition}
    A convex function $f : \mathcal{X} 
\rightarrow \mathcal{R}$
is $\alpha$-exp-concave over 
$\mathcal{X} \subseteq \mathcal{R}$
if the function $g : \mathcal{X} \rightarrow \mathcal{R}$ defined as $g(x) = e^{-\alpha f(x)}$ is concave.
\end{definition}

\begin{theorem}
\label{Thm2}
Let 
$k \geq 0$, and $c \geq 1$. Then the convex function $f : (0, 1] \rightarrow \mathcal{R}$ defined as $f(x) = \log \frac{1}{x} + k \log \log \left(\frac{1}{x} + c\right)$ is $\frac{a^{*}}{\log_{2} e}$-exp-concave,
where $a^{*} :=$
\begin{equation} 
    \!\underset{x \in (0, 1]}{\textnormal{min}}\ \frac{(1+xc)^{2}(\ln(\frac{1}{x}+c))^{2} + k(1+2xc)\ln(\frac{1}{x}+c) - k}{(1+xc)^{2}(\ln(\frac{1}{x}+c))^{2} + 2k(1+xc)\ln(\frac{1}{x}+c)+k^{2}}.
    \label{Eqn:astar}
\end{equation} 
Alternately, a simpler form of the result can be obtained, which states that f(x) is $\frac{a^{**}}{\log_{2}e}$-exp-concave, where 
\begin{equation}
    a^{**} = 1 - \frac{k}{\gamma(c)+k} - \frac{k}{(\gamma(c)+k)^{2}}
\end{equation}
with $\gamma(c) = \underset{x \in (0, 1]}{\textnormal{min}}\left[(1+xc)\ln\big(\frac{1}{x}+c\big)\right]$. 
\end{theorem}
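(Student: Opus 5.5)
Since $\log = \log_2$, we have $f(x) = \frac{1}{\ln 2}\big[\ln\frac1x + k\ln\ln(\frac1x+c) - k\ln\ln 2\big]$. So $e^{-\alpha f}$ with $\alpha = a/\log_2 e = a\ln 2$ equals, up to a positive constant, $g(x) = x^{a}\,L(x)^{-ak}$, where $L(x) := \ln(\frac1x + c)$. The plan is to determine exactly which $a$ make $g$ concave on $(0,1]$, and to show that $a^*$ is the largest such value. Convexity of $f$ itself will come out of the same computation, as the case $a\to 0$.

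I will work with $h = \ln g = a\ln x - ak\ln L(x)$. Since $g'' = g\,(h'' + h'^2)$ and $g>0$, concavity of $g$ is equivalent to $h'' + (h')^2 \le 0$. The pieces needed are $L'(x) = -\frac{1}{x(1+xc)}$ and $\frac{d}{dx}\big(-\ln L\big) = \frac{1}{x(1+xc)L}$. Writing $u = (1+xc)L$, this gives
\[
h' = \frac{a}{x}\Big(1 + \frac{k}{u}\Big).
\]
For $h''$ I differentiate $\frac{k}{x(1+xc)L}$, using $\frac{d}{dx}\big[x(1+xc)L\big] = (1+2xc)L - 1$. The result is
\[
h'' = -\frac{a}{x^2}\Big(1 + \frac{k\,((1+2xc)L-1)}{u^2}\Big).
\]
The condition $h'' + h'^2 \le 0$ then becomes
\[
a\Big(1+\frac{k}{u}\Big)^2 \le 1 + \frac{k(1+2xc)L - k}{u^2}.
\]
Multiplying through by $u^2$, this reads $a \le \frac{u^2 + k(1+2xc)L - k}{u^2 + 2ku + k^2}$ for every $x$, which is exactly the expression in \eqref{Eqn:astar}. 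So the largest admissible $a$ is the minimum $a^*$.

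Two side checks remain. First, I need $a^*>0$ (at least for the range of $k,c$ of interest), and the minimum should be attained. The numerator is positive because $L \ge \ln(1+c) \ge \ln 2$ and $u \ge \gamma(c)$. Attainment needs some care as $x\to 0$, where $u\to\infty$ and the ratio tends to 1, so the infimum is effectively over a compact set. Second, convexity of $f$ follows from the same $h''$ formula, since $f'' \propto -h''/a > 0$.

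For the simpler bound $a^{**}$, I will lower bound the ratio directly. Since $(1+2xc)L \ge (1+xc)L = u$, the numerator is at least $u^2 + ku - k$. Then
\[
\frac{u^2+ku-k}{(u+k)^2} = 1 - \frac{ku + k^2 + k}{(u+k)^2} = 1 - \frac{k}{u+k} - \frac{k}{(u+k)^2}.
\]
This is increasing in $u$, and $u \ge \gamma(c)$, so the ratio is at least $a^{**}$ for every $x$. Hence $a^{**} \le a^*$. Finally, exp-concavity is monotone in $\alpha$: if $e^{-\alpha f}$ is concave and $\alpha'<\alpha$, then $e^{-\alpha' f} = (e^{-\alpha f})^{\alpha'/\alpha}$ is a concave increasing function of a concave function, hence concave. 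This gives the second claim.

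The main obstacle is the bookkeeping in $h''$, in particular getting the $(1+2xc)L - 1$ term right. The other delicate point is positivity of $a^*$ and $a^{**}$ for large $k$; for example, $a^{**}>0$ requires $\gamma(c)$ to be large enough relative to $k$. This is possibly an implicit assumption that I would state explicitly.
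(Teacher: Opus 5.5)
Your proposal is correct and follows essentially the same route as the paper. You reduce concavity of $e^{-\alpha f}$ to the pointwise condition $a \le \frac{u^2+k(1+2xc)L-k}{(u+k)^2}$ with $u=(1+xc)L$, computing it via $g''=g\,(h''+h'^2)$ where the paper differentiates $g$ directly; you then minimize over $x$ and obtain $a^{**}$ by dropping the $kxcL$ term and using monotonicity in $u\ge\gamma(c)$, exactly as the paper does.

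Your closing worry about positivity is unfounded. Since $a^{**}=\frac{\gamma(c)^2+k(\gamma(c)-1)}{(\gamma(c)+k)^2}$ and $\gamma(c)\ge 2\ln 2>1$ for $c\ge 1$ (the paper shows this in proving the remark), $a^{**}>0$ for every $k\ge 0$. The same fact $u>1$ also gives the positivity of the numerator and of $f''$ that you left unverified.
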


\begin{remark}
\label{remark1}
It can be argued that $$a^* \ge a^{**} \ge (2\ln 2 - 1)/ k.$$ This suggests that the exp-concavity constant for the loss function $f(x)$ scales inverse-proportionally with respect to the parameter~$k.$
\end{remark}

The proof of Theorem~\ref{Thm2} can be found in the Appendix.

Algorithm~\ref{Alg:1} describes the EWOO scheme, which at each time step outputs a probability assignment given by a weighted mixture over all $m$-ary discrete distributions, where the weights are proportional to an exponential function of their cumulative loss in hindsight. In particular, we assign the following weight $w_{t}(p)$ to distribution $p \in \mathcal{D}$ in round $t$, using the value of $\alpha$ from the above theorem.
\begin{align}
w_{t}(p) =
  \begin{cases}
    1 \ \ \ \ \ \ \ \ \ \ \ \ \ \ \ \ \ \ \ \ \ \ \ \text{for} \ \ t = 1\\
    e^{-\alpha \sum_{\tau = 1}^{t-1}f(p(y_{\tau}))} \ \ \ \ \text{for} \ \ t > 1
  \end{cases}
\end{align}

These weights are then used to calculate $p_{t} \ \forall \ t \in [n]$, using Algorithm 1.

\begin{algorithm}
\label{Alg:1}
\caption{Exponentially Weighted Online Optimizer ($\alpha$)}
\begin{algorithmic}
\State \textbf{Input} : $k \geq 0, c \geq 1$, $\mathcal{Y} = \{1, \cdots, m\}$, convex set $\mathcal{X} = (0, 1]$, $\mathcal{D} = \{[p(1), \cdots, p(m)]^{T} : p(j) \in (0, 1] \ \forall \ j \in [m], \sum_{j=1}^{m}p(j) = 1\}$.\\
\State $\forall\ p \in \mathcal{D}$,
\begin{equation*}
    w_{1}(p) = 1
\end{equation*}
\begin{equation*}
    w_{t}(p) = e^{-\alpha \sum_{\tau = 1}^{t-1}f(p(y_{\tau}))} \ \forall\ t \geq 2
\end{equation*}
 \For{$t = 1, \cdots, n$}
 \State \begin{equation}
     p_{t} = \frac{\int_{p \in \mathcal{D}}p w_{t}(p) dp}{\int_{p \in \mathcal{D}}w_{t}(p) dp}
 \end{equation}
 \EndFor
\end{algorithmic}
\end{algorithm}

We finally obtain the following result on the worst-case regret performance of EWOO for our prediction problem.

\begin{theorem}
For the sequential prediction problem of an $m$-ary sequence $(y_{1}, \cdots, y_{n}) \in \mathcal{Y}^{n}$, where the loss function is as specified in \eqref{eqn:lossfn}, the EWOO estimator incurs logarithmic worst-case cumulative regret given by 
    \begin{equation}
        R_{n}^{A} \leq \frac{m}{a^{*}}\log_{2}n + \frac{2\log_{2}e}{a^{*}}
    \end{equation}
where $a^*$ is as given in \eqref{Eqn:astar}.
\end{theorem}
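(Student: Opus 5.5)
The plan is to follow the standard analysis of EWOO for exp-concave losses (Hazan, Agarwal, Kale), specialized to the simplex $\mathcal{D}$, which has dimension $m-1$, and to use Theorem~\ref{Thm2} for the exp-concavity constant.

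\textbf{Step 1: exp-concavity of the losses on $\mathcal{D}$.} Set $\alpha = a^{*}/\log_2 e$. For each $t$, the map $p \mapsto e^{-\alpha f(p(y_t))}$ is the composition of the concave function $g(x)=e^{-\alpha f(x)}$ (Theorem~\ref{Thm2}) with the linear coordinate projection $p\mapsto p(y_t)$. It is therefore concave on the convex set $\mathcal{D}$.

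\textbf{Step 2: one-step bound and telescoping.} Since $p_t$ is the $w_t$-weighted mean of $p$, Jensen's inequality applied to the concave $e^{-\alpha f_t}$ gives
\[
e^{-\alpha f_t(p_t)} \ge \frac{\int_{\mathcal{D}} e^{-\alpha f_t(p)} w_t(p)\,dp}{\int_{\mathcal{D}} w_t(p)\,dp} = \frac{\int w_{t+1}}{\int w_t}.
\]
Taking products over $t$ telescopes to
\[
e^{-\alpha\sum_t f_t(p_t)} \ge \frac{\int_{\mathcal{D}} w_{n+1}(p)\,dp}{\mathrm{vol}(\mathcal{D})}.
\]

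\textbf{Step 3: lower-bound the final partition function.} Let $p^*$ be the best distribution in hindsight; one may need to restrict to the interior, or argue by continuity. Consider the shrunk set $S=\{(1-\theta)p^*+\theta q : q\in\mathcal{D}\}$ with $\theta = 1/n$. Its volume is $\theta^{m-1}\mathrm{vol}(\mathcal{D})$. On $S$, concavity of $e^{-\alpha f_\tau}$ gives $e^{-\alpha f_\tau((1-\theta)p^*+\theta q)} \ge (1-\theta)e^{-\alpha f_\tau(p^*)}$. Multiplying over the $n$ rounds yields
\[
w_{n+1} \ge (1-1/n)^n e^{-\alpha\sum_\tau f_\tau(p^*)} \ge \tfrac14\, e^{-\alpha\sum_\tau f_\tau(p^*)}
\]
for $n\ge 2$. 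Hence
\[
\frac{\int w_{n+1}}{\mathrm{vol}(\mathcal{D})} \ge \tfrac14\, n^{-(m-1)} e^{-\alpha\sum_\tau f_\tau(p^*)}.
\]

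\textbf{Step 4: conclude.} Taking logarithms gives
\[
\sum_t f_t(p_t) - \sum_t f_t(p^*) \le \frac{(m-1)\ln n + \ln 4}{\alpha}.
\]
With $\alpha = a^*/\log_2 e$, this is $\frac{(m-1)\log_2 n + 2}{a^*}$ in base-2 units. That is at most $\frac{m}{a^*}\log_2 n + \frac{2\log_2 e}{a^*}$, with room to spare; the slack in the stated bound covers the choice of $\theta$ and the use of dimension $m$ rather than $m-1$. Finally, take the supremum over sequences.

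\textbf{Main obstacle.} The key technical point is Step 3 near the boundary. The loss is infinite when $p(y)=0$, and $\mathcal{D}$ excludes zero coordinates. If $p^*$ has a zero coordinate for some symbol, that symbol never appears, and the minimizer can be approached in the interior. Concavity of $e^{-\alpha f}$, with $e^{-\alpha f(0)}:=0$, extends continuously to the closed simplex, so the Jensen-type inequality in Step 3 still holds. The infimum argument then goes through by a limiting step. I would check this extension carefully, and also that the log base conventions match the stated constant $2\log_2 e$.
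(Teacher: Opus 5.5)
Your proposal is correct and is essentially the paper's argument. The paper simply invokes the EWOO bound for $\alpha$-exp-concave losses (Theorem 4.4 of \cite{b4}) with $\alpha = a^{*}/\log_2 e$ from Theorem~\ref{Thm2} and dimension $m$. You instead re-derive that bound directly on the $(m-1)$-dimensional simplex, which gives the slightly sharper $\frac{(m-1)\log_2 n + 2}{a^{*}}$ and makes explicit the open-simplex and infinite-loss boundary issue that the citation glosses over.

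The only caveat is that your choice $\theta = 1/n$ needs $n \ge 2$, and this restriction is unavoidable rather than an artifact of your proof. At $n=1$, EWOO plays the uniform distribution, so $R_1^{A} \ge \log_2 m$. Since $a^{*}$ does not depend on $m$, this exceeds $2\log_2 e/a^{*}$ once $m$ is large (for example $m \ge 8$ when $k=0$). The stated bound should therefore be read for $n \ge 2$.
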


The regret bound in Theorem 3 is obtained by combining the regret performance of EWOO, given in Theorem 4.4 of \cite{b4}, and the results obtained from Theorem 2. From Remark~\ref{remark1}, $a^* \ge (2\ln 2 - 1)/ k$ and thus, the adversarial regret for EWOO scales as $O(mk\log n)$, similar to the stochastic regret for the add-$\beta$ estimator. Finally, while a naive implementation of EWOO can be computationally very expensive, more practical implementation techniques based on random sampling have been discussed in the literature \cite{kalai_vempala}.

\section{Simulation Experiments}
In this section, we conduct some numerical evaluations to support our theoretical results. 

\subsection{Implementations in Stochastic setting}
We consider a Bernoulli($p$) source ($m = 2$) generating the outcome sequence and compare the performance of the add-$\beta$ estimator (for $\beta = 1/2 , 1$ corresponding to Jeffrey's prior and the uniform prior respectively) for different values of $k$. 
Finally, we consider $n = 1000$ and report regret values which are averaged over $500$ sequences generated i.i.d according to the underlying Bernoulli distribution.

\begin{figure}[h!]
    \begin{center}
    \includegraphics[width=1\linewidth]{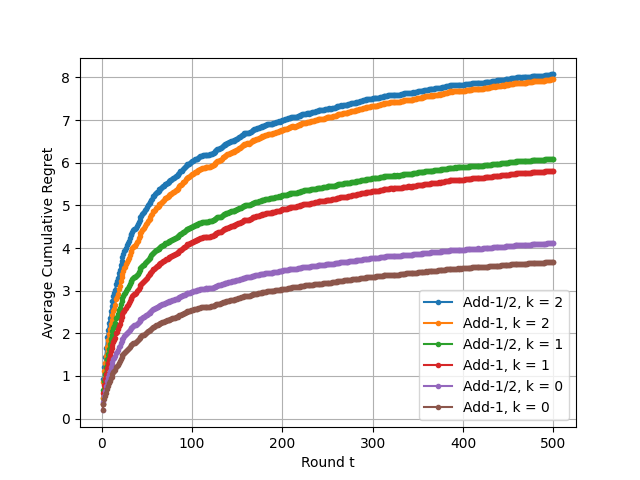}
        \caption{Cumulative regret under add-$\beta$ estimators, averaged over 1000 runs; outcomes are generated using $p = [0.25, 0.75]$.}
        \label{fig:1}
    \end{center}
\end{figure}

In Fig.~\ref{fig:1}, we plot the average cumulative regret of the different schemes for $k = 0, 1, 2$ under the Bernoulli(.25) source. In agreement with our theoretical results, we find that the regret scales logarithmically with the number of rounds. Note also that regret increases with increasing~$k.$

Next, we study the variation of the average cumulative regret of the add-$\beta$ estimator with the hyperparameter~$\beta$ in the stochastic setting. Recall that the add-$\beta$ estimator is given by $p_{t}(j) = \frac{m\beta}{t-1+m\beta}\frac{1}{m} + (1 - \frac{m\beta}{t-1+m\beta})\bar{p}_{j, t-1}$. Intuitively, a higher value of $\beta$ increases the weight of the uniform distribution (and therefore decreases the weight of the empirical distribution) on the actions  taken.

\begin{figure}[h!]
    \begin{center}
    \includegraphics[width=0.95\linewidth]{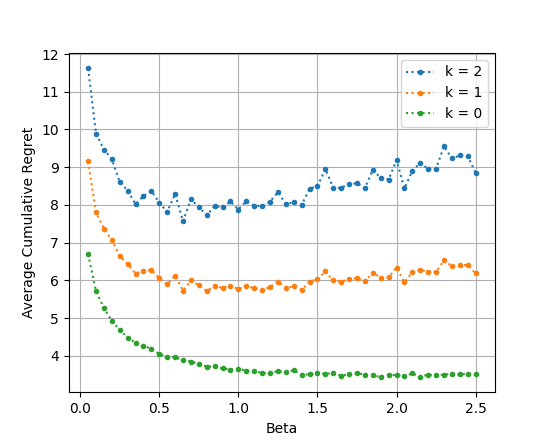}
        \caption{Variation of average cumulative regret of add-$\beta$ estimator with beta, for $p = [0.25, 0.75]$, 500 rounds, and 1000 samples.}
        \label{fig:2}
    \end{center}
\end{figure}

Fig. 2 shows the variation of average cumulative regret for 50 different beta values from 0.05 to 2.5 when the i.i.d. probability distribution is $p = [0.25,\ 0.75]$. We note that the optimal choice of~$\beta$ depends on the value of~$k$ (and indeed also on the horizon~$n;$ this latter dependence is not demonstrated here).
\begin{figure}[h!]
    \begin{center}
    \includegraphics[width=0.9\linewidth]{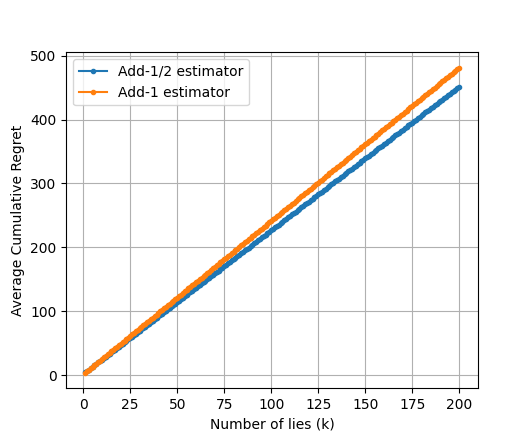}
        \caption{Variation of average cumulative regret of add-$\beta$ estimators with number of lies, for $p = [0.25, 0.75]$,  500 rounds, and 1000 samples.}
        \label{fig:3b}
    \end{center}
\end{figure}

Fig. 3. demonstrates that the average cumulative regret for add-beta estimator grows linearly with $k$, the number of lies in the query model, that matches with the expected dependence of the cumulative regret on $k$.

\subsection{Implementations in Adversarial setting}

\begin{figure}[h!]
    \begin{center}
    \includegraphics[width=0.9\linewidth]{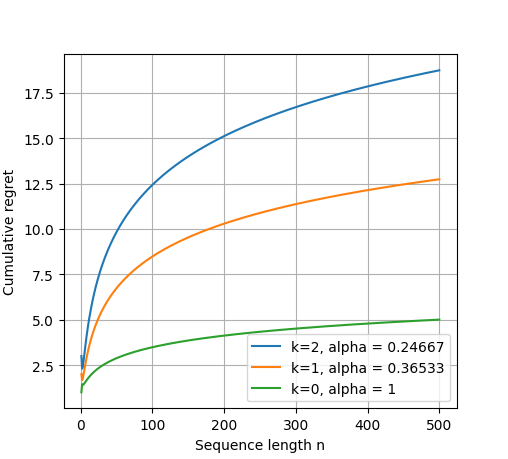}
        \caption{Cumulative regret incurred by EWOO for sequence $0^{n-1}1$ under 3 loss functions with $k \in \{0, 1, 2\}$, and varying sequence length.}
        \label{fig:4}
    \end{center}
\end{figure}

Here, we simulate the performance of the EWOO estimator under carefully selected binary outcome sequences, where we have empirically observed high regret. In particular, we vary $n$ from $1$ to $500$ and for each value of $n$, consider the outcome sequence $0^{n-1}1$. The best fixed distribution in hindsight for this sequence is computed numerically. Note that  the EWOO estimator involves an integration over the simplex; in our implementation, we numerically approximate it by discretizing the simplex. Fig.~\ref{fig:4} shows the average cumulative regret attained by the EWOO estimator for different values of $n$ under the sequence $0^{n-1}1$. We find that the regret scales logarithmically with the number of rounds and increases significantly with growing~$k.$

\section{Concluding Remarks}
In this work, we study the sequential probability assignment problem under a  loss function, which is a perturbation of the popular log loss and whose structure is motivated by the problem of ``twenty questions with a liar". We consider both the stochastic and adversarial setting, and provide upper bounds on the regret of the well-known add-$\beta$ and the EWOO algorithm respectively. Future research directions include proving lower bounds on the regret and considering other loss functions arising from guessing under different models for noise/lies. 
\remove{
In this work, we looked at a special loss function. We first obtained the logarithmic regret performance of the loss function in the stochastic setting. Although this result is non-trivial, the proof here is involved. The novel contribution of this paper is the regret performance of the loss function in the adversarial setting, which is logarithmic regret. This regret was obtained by first proving that the loss function falls under the class of $\alpha$-exp-concave function with $\alpha \in (0, 0.693)$, and thereby using the results of this class of functions in this setting.
}
\bibliographystyle{IEEEtran} 
\bibliography{references}

\begin{thebibliography}{00}

\bibitem{b11} Y. Dagan, Y. Filmus, D. Kane, and S. Moran, ``The entropy of lies: playing twenty questions with a liar'', ArXiv eprint 1811.02177, 2018.

\bibitem{b2} J. Jiao, K. Venkat, Y. Han, Yanjun, and T. Weissman, ``Maximum Likelihood Estimation of Functionals of Discrete Distributions'', IEEE Transactions on Information Theory, vol. 63, pp. 6774–6798, Oct 2017.

\bibitem{b3} C. Arnal, V. Cabannes, and V. Perchet, ``Mode Estimation with Partial Feedback'', Arxiv eprint 2402.13079, 2024.

\bibitem{b4} E. Hazan, ``Introduction to Online Convex Optimization'', Arxiv eprint 1909.05207, 2023.
\bibitem{b5}N.Merhav and M. Feder, ``Universal prediction", IEEE Transactions on Information Theory, vol. 44, no. 6, pp. 2124-2147, Jun. 2002. 

\bibitem{b6}
R.~E. Krichevsky and V.~K. Trofimov.
\newblock The performance of universal encoding.
\newblock {\em IEEE Transactions on Information Theory}, 27(2):199--207, 1981.

\bibitem{b7}
N.~Cesa-Bianchi and G.~Lugosi.
\newblock {\em Prediction, Learning, and Games}.
\newblock Cambridge University Press, 2006.

\bibitem{b8}
Y.~Lomnitz and M.~Feder,
``A Universal Probability Assignment for Prediction of Individual Sequences'', 2013 IEEE International Symposium on Information Theory.

\bibitem{b9}
M.~Feder, N.~Merhav and M.~Gutman, 
``Universal prediction of individual sequences'', 1991, 17th Convention of Electrical and Electronics Engineers in Israel, pages 223-226.

\bibitem{b10}
A.~Bhatt,
``Universal Prediction of m-ary Sequences'', 2023 IEEE International Symposium on Information Theory, pages 2236-2241.

\bibitem{b11} L. ~Zhou, and Alfred O. ~Hero.
\newblock {\em Twenty Questions with Random Error}.
\newblock Foundations and Trends in Communications and Information Theory, Vol. 22, pages 394–604, 2006.

\end{thebibliography}

\remove{

}
\section{Proofs}

\subsection{Proof of Theorem 1}
\begin{proof}

For each $t\in[n]$, the add-beta estimator $p_t$, as specified in \eqref{eqn:addbeta}, is
\begin{equation}
p_t(j)
=
\frac{m\beta}{t-1+m\beta}\frac{1}{m}
+
\frac{t-1}{t-1+m\beta}\bar p_{j,t-1}.
\end{equation}

The expected cumulative regret is
\begin{align}
R_n^S
&\triangleq
\sum_{t=1}^n
E_{y_t}
E_{y_{\tau}|_{\tau=1}^{t-1}}
\Bigg[
\log\frac{p(y_t)}{p_t(y_t)}
+
k\log
\left(
\frac{\log\!\left(\frac{1}{p_t(y_t)}+c\right)}
{\log\!\left(\frac{1}{p(y_t)}+c\right)}
\right)
\Bigg]
\nonumber\\
&=
\sum_{t=1}^{n}\sum_{j=1}^{m}
p(j)
E_{y_{\tau}|_{\tau=1}^{t-1}}
\Bigg[
\log\frac{p(j)}{p_t(j)}
+
k
\log
\left(
\frac{\log\!\left(\frac{1}{p_t(j)}+c\right)}
{\log\!\left(\frac{1}{p(j)}+c\right)}
\right)
\Bigg].
\end{align}

Define
\begin{equation}
r_t(j)
=
\log\frac{p(j)}{p_t(j)}
+
k
\log
\left(
\frac{\log\!\left(\frac{1}{p_t(j)}+c\right)}
{\log\!\left(\frac{1}{p(j)}+c\right)}
\right).
\end{equation}

Thus
\begin{align}
R_n^S
&=
\sum_{t=1}^{n}\sum_{j=1}^{m}
p(j)
E_{y_{\tau}|_{\tau=1}^{t-1}}[r_t(j)]
\nonumber\\
&=A+B,
\end{align}
where
\begin{align}
A
&=
\sum_{t=1}^{n}\sum_{j=1}^{m}
p(j)
E_{y_{\tau}|_{\tau=1}^{t-1}}
\big[
\mathbb{I}(N_{j,t-1}=0)r_t(j)
\big],
\\
B
&=
\sum_{t=1}^{n}\sum_{j=1}^{m}
p(j)
E_{y_{\tau}|_{\tau=1}^{t-1}}
\big[
\mathbb{I}(N_{j,t-1}\ge1)r_t(j)
\big].
\end{align}

Here $\mathbb{I}(\mathcal{E})$ denotes the indicator of event $\mathcal{E}$.

We first bound $A$ and then $B$.

\subsubsection{Upper bound on $A$}

Recall that $N_{j,t-1}\sim\mathrm{Bin}(t-1,p(j))$.
If $N_{j,t-1}=0$ then
\[
p_t(j)=\frac{m\beta}{t-1+m\beta}\frac{1}{m}.
\]

Hence
\begin{align}
&E_{y_{\tau}|_{\tau=1}^{t-1}}
\big[
\mathbb{I}(N_{j,t-1}=0)r_t(j)
\big]
\nonumber\\
&=
(1-p(j))^{t-1}\Bigg[
\log\frac{(t-1+m\beta)p(j)}{\beta}
\nonumber\\
&\quad
+
k
\log
\left(
\frac{
\log\!\left(\frac{t-1+m\beta}{\beta}+c\right)
}{
\log\!\left(\frac{1}{p(j)}+c\right)
}
\right)
\Bigg]
.
\end{align}

Therefore
\begin{align}
A
&=
\sum_{t=1}^{n}\sum_{j=1}^{m}
p(j)(1-p(j))^{t-1}
\nonumber\\
&\quad
\Bigg[
\log\!\left(p(j)\!\left(\frac{t-1}{\beta}+m\right)\right)
+
k\log\log\!\left(\frac{t-1}{\beta}+m+c\right)
\nonumber\\
&\qquad
-
k\log\log\!\left(\frac{1}{p(j)}+c\right)
\Bigg].
\end{align}

Let
\[
p_{\max}=\max_j p(j),\qquad
p_{\min}=\min_j p(j).
\]

Then
\begin{align}
A
&\le
\sum_{t=1}^{n}
(1-p_{\min})^{t-1}
\nonumber\\
&\quad
\Bigg[
\log\!\left(\frac{t-1}{\beta}+m\right)
-
H(p)
+
k\log\log\!\left(\frac{t-1}{\beta}+m+c\right)
\nonumber\\
&\qquad
-
k\log\log\!\left(\frac{1}{p_{\max}}+c\right)
\Bigg].
\label{Eqn:A}
\end{align}
Since $0 < (1-p_{\min}) < 1$, the factor $(1-p_{\min})^{t-1}$
decays exponentially in $t$. The remaining terms inside the
brackets grow at most logarithmically in $t$. Since the series
\[
\sum_{t=1}^{\infty}
(\log t)(1-p_{\min})^{t}
\]
converges, the above summation is bounded by a
constant independent of $n$, and we conclude that
\[
A = O(1).
\]
\subsubsection{Upper bound on $B$}

Recall
\[
p_t(j)=
\frac{m\beta}{t-1+m\beta}\frac{1}{m}
+
\frac{t-1}{t-1+m\beta}\bar p_{j,t-1}.
\]

If $N_{j,t-1}\ge1$ then $\bar p_{j,t-1}>0$.

\begin{align}
\mathbb{I}(N_{j,t-1}\ge1)r_t(j)
&=
\mathbb{I}(N_{j,t-1}\ge1)
\Bigg[
-\log\frac{p_t(j)}{p(j)}
\nonumber\\
&\hspace{-.5in}+
k
\log
\Bigg(
1+
\frac{
\log\!\left(\frac{1}{p_t(j)}+c\right)
-
\log\!\left(\frac{1}{p(j)}+c\right)
}{
\log\!\left(\frac{1}{p(j)}+c\right)
}
\Bigg)
\Bigg].
\end{align}

Using convexity of $-\log(\cdot)$ and Jensen's inequality, we have 
\begin{align}
-\log\frac{p_t(j)}{p(j)}
&\le
-\frac{m\beta}{t-1+m\beta}
\log\frac{1}{mp(j)}
\nonumber\\
&\quad
-
\frac{t-1}{t-1+m\beta}
\log\frac{\bar p_{j,t-1}}{p(j)}.
\end{align}

Using $\ln(1+x)\le x$  and $\log_2(1+x)\le x/\ln2$ for $x>-1$, we obtain
\begin{align}
\mathbb{I}(N_{j,t-1}\ge1)r_t(j)
&\le
\mathbb{I}(\bar p_{j,t-1}>0)
\Bigg[
-\frac{m\beta}{t-1+m\beta}\log\frac{1}{mp(j)}
\nonumber\\
&\quad
-
\frac{t-1}{t-1+m\beta}
\log\frac{\bar p_{j,t-1}}{p(j)}
\nonumber\\
&\quad
+
\frac{k}{(\ln2)\ln(\frac{1}{p(j)}+c)}
\frac{\frac{1}{p_t(j)}-\frac{1}{p(j)}}
{\frac{1}{p(j)}+c}
\Bigg].
\end{align}

Using convexity of $1/x$,
\[
\frac{1}{p_t(j)}
\le
\frac{m\beta}{t-1+m\beta}m
+
\frac{t-1}{t-1+m\beta}
\frac{1}{\bar p_{j,t-1}}.
\]

Substituting yields
\[
\mathbb{I}(N_{j,t-1}\ge1)r_t(j)=C+D
\]

where

\begin{align}
C
&=
\mathbb{I}(\bar p_{j,t-1}>0)
\Bigg[
-\frac{m\beta}{t-1+m\beta}\log\frac{1}{mp(j)}
\nonumber\\
&\quad
+
\frac{k}{(\ln2)\ln(\frac{1}{p(j)}+c)}
\frac{m\beta}{t-1+m\beta}
\frac{m-\frac{1}{p(j)}}{\frac{1}{p(j)}+c}
\Bigg],
\end{align}

\begin{align}
D
&=
\mathbb{I}(\bar p_{j,t-1}>0)
\Bigg[
-\frac{t-1}{t-1+m\beta}
\log\frac{\bar p_{j,t-1}}{p(j)}
\nonumber\\
&\quad
+
\frac{k}{(\ln2)\ln(\frac{1}{p(j)}+c)}
\frac{t-1}{t-1+m\beta}
\frac{\frac{1}{\bar p_{j,t-1}}-\frac{1}{p(j)}}
{\frac{1}{p(j)}+c}
\Bigg].
\end{align}

Taking expectations,
\begin{equation}
E[\mathbb{I}(N_{j,t-1}\ge1)r_t(j)]
=
E[C]+E[D].
\label{Eqn:CplusD}
\end{equation}

Bounding $E[C]$ gives
\begin{align}
E[C]
&\le
\frac{m\beta}{t-1+m\beta}\log m
\nonumber\\
&\quad
+
\frac{k}{(\ln2)\ln(\frac{1}{p(j)}+c)}
\frac{m^2\beta}{(t-1+m\beta)(\frac{1}{p(j)}+c)}.
\label{Eqn:C}
\end{align}

We next bound the expectation of $D$.

\begin{proposition}
\label{Prop1}
For every $t$ and $j$, we have 
\begin{align}
E[D]
&\le
\frac{2(t-1)}{t-1+m\beta}
e^{-\frac{(t-1)p(j)}{10}}\log(t-1)
\nonumber\\
&\quad
+
\frac{4}{(\ln2)(t-1+m\beta)p(j)}
\nonumber\\
&\quad
+
\frac{k}{(\ln2)\ln\!\left(\frac{1}{p(j)}+c\right)}
\frac{1}{\frac{1}{p(j)}+c}
\nonumber\\
&\quad
\Bigg[
\frac{(t-1)^2}{t-1+m\beta}e^{-\frac{(t-1)p(j)}{8}}
+
\frac{8}{p^2(j)(t-1+m\beta)}
\Bigg].
\label{Eqn:D}
\end{align}
\label{prop}
\end{proposition}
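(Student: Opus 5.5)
The plan is to bound the two parts of $D$ separately, writing $N=N_{j,t-1}\sim\mathrm{Bin}(t-1,p(j))$, $q=p(j)$, and $\bar p=N/(t-1)$. The first part is the log term
\[
D_1=-\frac{t-1}{t-1+m\beta}\,\mathbb{I}(N\ge1)\log\frac{\bar p}{q}.
\]
The second part is the reciprocal term
\[
D_2=\frac{k}{(\ln2)\ln(\frac1q+c)(\frac1q+c)}\,\frac{t-1}{t-1+m\beta}\,\mathbb{I}(N\ge1)\Big(\frac1{\bar p}-\frac1q\Big).
\]
In both parts I split on the event $\mathcal{G}=\{\bar p\ge q/2\}$ versus its complement $\{1\le N<(t-1)q/2\}$. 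On the complement a multiplicative Chernoff bound gives
\[
P\big(N<(t-1)q/2\big)\le e^{-(t-1)q/8}.
\]
I also expect to need the slightly weaker exponent $e^{-(t-1)q/10}$, either by using a threshold slightly different from $1/2$ or simply by loosening; I will pick constants so that they match the stated ones.

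\textbf{Log term.} On the bad event, $N\ge1$ gives $\bar p\ge 1/(t-1)$, so $-\log(\bar p/q)\le\log((t-1)q)\le\log(t-1)$. This produces the contribution
\[
\frac{t-1}{t-1+m\beta}\,e^{-(t-1)q/10}\log(t-1),
\]
and the factor $2$ in the statement gives slack for the constant. On $\mathcal{G}$, I use $-\ln x\le (1-x)+(1-x)^2$ for $x\ge1/2$, with $x=\bar p/q$. The linear part has expectation $E[(1-\bar p/q)\mathbb{I}_{\mathcal G}]=-E[(1-\bar p/q)\mathbb{I}_{\mathcal G^c}]\le 0$, because $1-\bar p/q>0$ on $\mathcal{G}^c$ and $E[\bar p]=q$. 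Note that $\mathcal G^c$ includes $N=0$, which is harmless since that only makes the sign favorable. The quadratic part satisfies
\[
E\big[(1-\bar p/q)^2\big]=\frac{1-q}{(t-1)q}.
\]
Converting to $\log_2$ and multiplying by $(t-1)/(t-1+m\beta)$ gives a term of order $\frac{1}{(\ln 2)(t-1+m\beta)q}$, which fits within the stated $\frac{4}{(\ln2)(t-1+m\beta)q}$.

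\textbf{Reciprocal term.} On the bad event, $1/\bar p\le t-1$, so $\frac{t-1}{t-1+m\beta}\,E[\cdot\,\mathbb{I}_{\mathcal G^c}]\le \frac{(t-1)^2}{t-1+m\beta}e^{-(t-1)q/8}$. On $\mathcal{G}$, I write
\[
\frac1{\bar p}-\frac1q=\frac{q-\bar p}{q^2}+\frac{(q-\bar p)^2}{q^2\bar p}\le \frac{q-\bar p}{q^2}+\frac{2(q-\bar p)^2}{q^3}.
\]
The linear piece again has nonpositive expectation by the same centering argument. The quadratic piece is at most
\[
\frac{2q(1-q)}{(t-1)q^3}\le \frac{2}{(t-1)q^2}.
\]
After multiplying by $(t-1)/(t-1+m\beta)$ this is at most $\frac{8}{q^2(t-1+m\beta)}$, comfortably within the stated constant.

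\textbf{Main obstacle.} The delicate step is handling the first-order (linear) terms. Their restriction to $\mathcal{G}$ is not mean zero, so I need the sign argument: $E[(q-\bar p)\mathbb{I}_{\mathcal G}]=-E[(q-\bar p)\mathbb{I}_{\mathcal G^c}]\le0$. If that sign argument fails to apply cleanly somewhere, I would instead bound this piece by $q\,P(\mathcal G^c)$ and absorb it into the exponential terms. Beyond that, the work is bookkeeping: matching the Chernoff exponents $1/8$ and $1/10$ and the constants $2$, $4$, and $8$.
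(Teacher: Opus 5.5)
Your proof is correct. For the reciprocal term it follows the paper's route. You isolate the lower tail, bound $1/\bar p\le t-1$ there and apply the Chernoff bound $e^{-(t-1)p(j)/8}$. On the remaining event you use a second-order expansion whose linear part is handled by centering and whose quadratic part is controlled by $\mathrm{Var}(\bar p_{j,t-1})/p^2(j)$.

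The differences are in the details. The paper splits two-sidedly on $|X|\ge\frac12$, discards the upper tail because $\frac{t-1}{N}-\frac{1}{p(j)}<0$ there, and on $|X|<\frac12$ uses $(1+X)^{-1}\le 1-X+8X^2$. It then passes to $E[-X+8X^2]$ over the whole space, which tacitly requires $-X+8X^2\ge0$ on $|X|\ge\frac12$ (true, but not stated). Your one-sided split $\mathcal G=\{\bar p\ge q/2\}$ works because both of your expansions, $-\ln x\le(1-x)+(1-x)^2$ and $1/\bar p\le 2/q$, hold on the entire half-line $\bar p/q\ge\frac12$. Your sign argument $E[(q-\bar p)\mathbb{I}_{\mathcal G}]=-E[(q-\bar p)\mathbb{I}_{\mathcal G^c}]\le0$ also makes the centering step explicit.

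The main genuine difference is the log term. The paper simply imports it from Lemmas 4 and 5 of \cite{b3}; there the factor $2$ and the exponent $1/10$ come from a two-sided tail estimate, since the upper tail satisfies $P(N\ge\frac32(t-1)p(j))\le e^{-(t-1)p(j)/10}$. You instead derive it directly with the same split-and-expand technique. This makes your proof self-contained and gives sharper constants: $e^{-(t-1)q/8}\log(t-1)$ with coefficient $1$, $\frac{1}{\ln2}$ in place of $\frac{4}{\ln 2}$, and $2$ in place of $8$. The stated bound then follows by loosening. In particular, you need no modified threshold to produce the $1/10$ exponent, and your fallback for the linear terms is never used.
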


We relegate the proof of Proposition~\ref{prop} to the end of this section.
Substituting the bounds from \eqref{Eqn:C} and \eqref{Eqn:D} into
\eqref{Eqn:CplusD} and defining
\begin{equation*}
l=\left(\frac{1}{p_{max}}+c\right)
\ln\!\left(\frac{1}{p_{max}}+c\right),
\qquad
s=1-m\beta,
\end{equation*}
we obtain
\begin{align}
E_{y_{\tau}|_{\tau=1}^{t-1}}
\!\left[
\mathbb{I}(N_{j,t-1}\ge1) r_t(j)
\right]
&\le
\frac{1}{t-s}
\Bigg[
m\beta\log m
+
\frac{4}{(\ln2)p(j)}
\nonumber\\
&+
\frac{k}{(\ln2)l}
\left(
m^{2}\beta+\frac{8}{p^{2}(j)}
\right)
\Bigg]
+E ,
\end{align}
where
\begin{align}
E
&=
\frac{t-1}{t-s}
\Bigg[
\frac{k}{(\ln2)l}(t-1)
e^{-\frac{(t-1)p(j)}{8}}
+
2e^{-\frac{(t-1)p(j)}{10}}
\log(t-1)
\Bigg].
\end{align}
Recall that
\[
B
=
\sum_{t=1}^{n}
\sum_{j=1}^{m}
p(j)
E_{y_{\tau}|_{\tau=1}^{t-1}}
\!\left[
\mathbb{I}(N_{j,t-1}\ge1) r_t(j)
\right].
\]

Noting that for $t = 1$ and any $j$, $\mathbb{I}(N_{j,t-1}\ge1) = 0$, and then substituting the above bound yields
\begin{align}
B
&\le
\sum_{t=2}^{n}
\sum_{j=1}^{m}
p(j)E
\nonumber\\
&\quad+
\sum_{t=2}^{n}
\sum_{j=1}^{m}
p(j)
\frac{1}{t-s}
\Bigg[
m\beta\log m
+
\frac{4}{(\ln2)p(j)}
\nonumber\\
&\qquad+
\frac{k}{(\ln2)l}
\left(
m^{2}\beta+\frac{8}{p^{2}(j)}
\right)
\Bigg].
\label{Eqn:B}
\end{align}
Consider the first term. Since $s=1-m\beta$ is a constant, we have $\frac{t-1}{t-s}=O(1)$. Hence
\[
E = O\!\left((t-1)^2 e^{-\frac{(t-1)p(j)}{8}} + \log(t-1)e^{-\frac{(t-1)p(j)}{10}}\right).
\]
Since exponential decay dominates polynomial and logarithmic growth, both
\[
\sum_{t=1}^{\infty} t^2 e^{-cp(j)t}
\quad\text{and}\quad
\sum_{t=1}^{\infty} \log(t)e^{-cp(j)t}
\]
converge for any $c>0$. Therefore
\[
\sum_{t=1}^{n}\sum_{j=1}^{m} p(j)E = O(1).
\]

Further, using
\[
\sum_{j=1}^{m}p(j)=1,
\qquad
p(j)\ge p_{min},
\]
the bound in \eqref{Eqn:B} becomes
\begin{align}
B
&\le
O(1) 
\nonumber\\
&+
\left(
m\beta\log m
+
\frac{4m}{\ln2}
+
\frac{k}{(\ln2)l}
\left(
m^{2}\beta+\frac{8m}{p_{min}}
\right)
\right)
\sum_{t=2}^{n}
\frac{1}{t-s}.
\end{align}

Since $s=1-m\beta$, we have
$t-s=t-1+m\beta$. Thus
\begin{align}
\sum_{t=2}^{n}\frac{1}{t-s}
=
\sum_{t=2}^{n}\frac{1}{t-1+m\beta}
\le
\sum_{t=2}^{n}\frac{1}{t-1+\lfloor m\beta\rfloor}.
\end{align}

Using the standard harmonic bound,
\begin{align}
\sum_{t=2}^{n}
\frac{1}{t-1+\lfloor m\beta\rfloor}
\le
\ln\!\left(
\frac{n-1}{\lfloor m\beta\rfloor}+1
\right).
\end{align}

Therefore
\begin{align}
B
&\le
O(1)
\nonumber\\
&\quad+
\Bigg(
m\beta\log m
+
\frac{4m}{\ln2}
+
\frac{k}{(\ln2)l}
\left(
m^{2}\beta+\frac{8m}{p_{min}}
\right)
\Bigg)
\nonumber\\
&\qquad\times
\ln\!\left(
\frac{n-1}{\lfloor m\beta\rfloor}+1
\right).
\end{align}

Finally, combining with the bound on $A$ in \eqref{Eqn:A} and noting that
\[
A+\sum_{t=1}^{n}\sum_{j=1}^{m}p(j)E=O(1),
\]
we obtain
\begin{align}
R_n^S
&=
\sum_{t=1}^{n}\sum_{j=1}^{m}
p(j)
E_{y_{\tau}|_{\tau=1}^{t-1}}
[r_t(j)] = A + B
\nonumber\\
&\le
\Bigg(
m\beta\log m
+
\frac{4m}{\ln2}
+
\frac{k}{(\ln2)l}
\left(
m^{2}\beta+\frac{8m}{p_{min}}
\right)
\Bigg)
\nonumber\\
&\qquad\times
\ln\!\left(
\frac{n-1}{\lfloor m\beta\rfloor}+1
\right)
+O(1).
\end{align}

This completes the proof of Theorem~1.
\end{proof}

\subsection*{Proof of Proposition~\ref{Prop1}}
\begin{proof}
Define
\[
\alpha_j
=
\frac{k}
{(\ln2)\ln\!\left(\frac1{p(j)}+c\right)}
\cdot
\frac{1}{\frac1{p(j)}+c}.
\]

\begin{align}
E[D]
&=
\frac{t-1}{t-1+m\beta}
E\!\left[
\mathbb{I}(\bar p_{j,t-1}>0)
\left(
-\log\frac{\bar p_{j,t-1}}{p(j)}
\right)
\right]
\nonumber\\
&+
\frac{t-1}{t-1+m\beta}
\cdot \alpha_j
\cdot
E\!\left[
\mathbb{I}(\bar p_{j,t-1}>0)
\left(
\frac1{\bar p_{j,t-1}}-\frac1{p(j)}
\right)
\right].
\label{Eqn:ED}
\end{align}

From Lemmas 4 and 5 of \cite{b3},
\begin{align}
E\!\left[
-\mathbb{I}(\bar p_{j,t-1}>0)
\log\frac{\bar p_{j,t-1}}{p(j)}
\right]
&\le
2e^{-\frac{(t-1)p(j)}{10}}\log(t-1)
\nonumber\\
&\quad+
\frac{4}{(\ln2)(t-1)p(j)} .
\label{Eqn:Bound1}
\end{align}

Next,
\begin{align}
&
E\!\left[
\mathbb{I}(\bar p_{j,t-1}>0)
\left(
\frac{1}{\bar p_{j,t-1}}-\frac{1}{p(j)}
\right)
\right]
\nonumber\\
&=
E\!\Bigg[\mathbb{I}(\bar p_{j,t-1}>0)
\mathbb{I}\!\left(
\left|\frac{\bar p_{j,t-1}-p(j)}{p(j)}\right|\ge \tfrac12
\right)
\left(
\frac{1}{\bar p_{j,t-1}}-\frac{1}{p(j)}
\right)
\Bigg]
\nonumber\\
&+
E\!\Bigg[\mathbb{I}(\bar p_{j,t-1}>0)
\mathbb{I}\!\left(
\left|\frac{\bar p_{j,t-1}-p(j)}{p(j)}\right|< \tfrac12
\right)
\left(
\frac{1}{\bar p_{j,t-1}}-\frac{1}{p(j)}
\right)
\Bigg].
\label{Eqn:Part2}
\end{align}

Define
\[
f(N) = \frac{t-1}{N} - \frac{1}{p(j)}.
\]

Since $\bar p_{j,t-1} = \frac{N_{j,t-1}}{t-1}$, we have for the first term:
\begin{align}
&
E\!\Bigg[\mathbb{I}(\bar p_{j,t-1}>0)
\mathbb{I}\!\left(
\left|\frac{\bar p_{j,t-1}-p(j)}{p(j)}\right|\ge \tfrac12
\right)
f(N_{j,t-1})
\Bigg]
\nonumber\\
&\overset{(a)}{\le}
E\!\Bigg[
\mathbb{I}\!\left(1 \le
N_{j,t-1}\le \tfrac12 (t-1)p(j)
\right)
f(N_{j,t-1})
\Bigg]
\nonumber\\
&\overset{(b)}{\le}
\left(t-1-\frac{1}{p(j)}\right)
\mathbb{P}\!\left(
N_{j,t-1}\le \tfrac12 (t-1)p(j)
\right)
\nonumber\\
&\overset{(c)}{\le}
\left(t-1-\frac{1}{p(j)}\right)
e^{-\frac{(t-1)p(j)}{8}}\nonumber\\
&\le \left(t-1\right)e^{-\frac{(t-1)p(j)}{8}}
\label{Eqn:Bound2}
\end{align}
where:
\begin{itemize}
\item[(a)] uses that the events 
\[ \bar p_{j,t-1} > 0, 
\left|\frac{\bar p_{j,t-1}-p(j)}{p(j)}\right|\ge \frac{1}{2}
\]
imply either 
\(
1 \le N_{j,t-1}\le \tfrac12 (t-1)p(j)
\)
or 
\(
N_{j,t-1}\ge \tfrac32 (t-1)p(j)
\),
and the latter is dropped since $f(N)$ is negative in that case.

\item[(b)] uses the bound 
\[
f(N_{j,t-1}) \le t-1-\frac{1}{p(j)}
\]
for $N_{j,t-1}\ge 1$.

\item[(c)] follows from the Chernoff bound for 
\(
N_{j,t-1}\sim \mathrm{Bin}(t-1,p(j))
\).
\end{itemize}
For the second term, define
\[
X := \frac{\bar p_{j,t-1}-p(j)}{p(j)}.
\]
and note that 
\[
\frac{1}{\bar p_{j,t-1}}
=
\frac{1}{p(j)}\cdot\frac{1}{1+X}.
\]

Thus,
\begin{align}
&
E\!\Bigg[
\mathbb{I}(\bar p_{j,t-1}>0)
\mathbb{I}(|X|<\tfrac12)
\left(
\frac{1}{\bar p_{j,t-1}}-\frac{1}{p(j)}
\right)
\Bigg]
\nonumber\\
&\le
\frac{1}{p(j)}
E\!\Bigg[
\mathbb{I}(|X|<\tfrac12)
\left(
\frac{1}{1+X}-1
\right)
\Bigg]
\nonumber\\
&\overset{(d)}{\le}
\frac{1}{p(j)}
E\!\Big[
-X + 8X^2
\Big]
\nonumber\\
&\overset{(e)}{=}
\frac{1}{p(j)}
E\!\left[
8X^2
\right]
\nonumber\\
&\overset{(f)}{=}
\frac{8}{p(j)}
\cdot
\frac{\mathrm{Var}(\bar p_{j,t-1})}{p^2(j)}
\nonumber\\
&\overset{(g)}{=}
\frac{8(1-p(j))}
{(t-1)p^2(j)}\nonumber\\
&\le 
\frac{8}
{(t-1)p^2(j)}
\label{Eqn:Bound}
\end{align}
where:
\begin{itemize}
\item[(d)] uses the inequality 
\(
(1+x)^{-1} \le 1 - x + 8x^2
\)
valid for $|x|\le \tfrac12$;

\item[(e)] follows since $E[\bar p_{j,t-1}] = p_j$, and hence 
\(
E[X]=0
\);
\item[(f)] uses
\[
X = \frac{\bar p_{j,t-1}-p(j)}{p(j)}
\quad\Rightarrow\quad
E[X^2]
=
\frac{\mathrm{Var}(\bar p_{j,t-1})}{p^2(j)}, \mbox{ and }
\]
\item[(g)] follows since $\mathrm{Var}(\bar p_{j,t-1}) = p_j(1-p_j)/ (t-1)$.
\end{itemize}
Plugging the bounds in \eqref{Eqn:Bound2} and \eqref{Eqn:Bound} in the expression in \eqref{Eqn:Part2}, we get
\begin{align}
&
E\!\left[
\mathbb{I}(\bar p_{j,t-1}>0)
\left(
\frac{1}{\bar p_{j,t-1}}-\frac{1}{p(j)}
\right)
\right]
\nonumber\\
&\le
\left(t-1\right)
e^{-\frac{(t-1)p(j)}{8}}
+
\frac{8}
{(t-1)p^2(j)}.
\label{Eqn:Bound3}
\end{align}

Finally, substituting the bounds from \eqref{Eqn:Bound1} and \eqref{Eqn:Bound3} into the expression of $E[D]$ from \eqref{Eqn:ED}:
\begin{align}
E[D]
&\le
\frac{2(t-1)}{t-1+m\beta}
e^{-\frac{(t-1)p(j)}{10}}
\log(t-1)
\nonumber\\
&+
\frac{4}{(\ln2)(t-1+m\beta)p(j)}
\nonumber\\
&+
\alpha_j
\Bigg(
\frac{(t-1)^2}{t-1+m\beta}
e^{-\frac{(t-1)p(j)}8}
+
\frac{8}
{(t-1+m\beta)p^2(j)}
\Bigg).
\nonumber
\end{align}
This completes the proof of Proposition~\ref{Prop1}.
\end{proof}

\remove{
\subsection{Proof of Theorem 1} 
\begin{proof}
    For each $t \in [n]$, the add-beta estimator $p_{t}$, as specified in \eqref{eqn:addbeta}, 
is given by

\begin{equation*}
    p_{t}(j) = \frac{m\beta}{t-1+m\beta} \frac{1}{m} + \frac{t-1}{t-1+m\beta} \bar{p}_{j, t-1}
\end{equation*}
The expected cumulative regret $R_{n}^{S}$ is given by
\begin{align*}
    \hspace{2mm} R_{n}^{S} \triangleq \sum_{t=1}^{n} E_{y_{t}} E_{y_{\tau}|_{\tau = 1}^{t-1}}\left[ \textnormal{log} \frac{p(y_{t})}{p_{t} (y_{t})} + k \textnormal{log} \left(\frac{\textnormal{log} \left(\frac{1}{p_{t} (y_{t})} + c\right)}{\textnormal{log} \left(\frac{1}{p (y_{t})} + c\right)}\right) \right] \\ = \sum_{t=1}^{n} \sum_{j=1}^{m} p(j) E_{y_{\tau}|_{\tau = 1}^{t-1}}\left[\textnormal{log} \frac{p(j)}{p_{t} (j)} + k \textnormal{log} \left(\frac{\textnormal{log} \left(\frac{1}{p_{t} (j)} + c\right)}{\textnormal{log} \left(\frac{1}{p (j)} + c\right)}\right) \right]
\end{align*}
\begin{equation}
    = \sum_{t=1}^{n} \sum_{j=1}^{m} p(j) E_{y_{\tau}|_{\tau = 1}^{t-1}} [r_{t} (j)] = A + B
\end{equation}
\begin{align*}
    \textnormal{where}\ r_{t} (j) = \textnormal{log} \frac{p(j)}{p_{t} (j)} + k \textnormal{log} \left(\frac{\textnormal{log} \left(\frac{1}{p_{t} (j)} + c\right)}{\textnormal{log} \left(\frac{1}{p (j)} + c\right)}\right), \\ A = \sum_{t=1}^{n} \sum_{j=1}^{m} p(j) E_{y_{\tau}|_{\tau = 1}^{t-1}} [\mathbb{I}(N_{j, t-1} = 0)r_{t} (j)], \\ \textnormal{and}\ B = \sum_{t=1}^{n} \sum_{j=1}^{m} p(j) E_{y_{\tau}|_{\tau = 1}^{t-1}} [\mathbb{I}(N_{j, t-1} \geq 1)r_{t} (j)],
\end{align*}
with $\mathbb{I}(\mathcal{E})$ denoting the indicator variable corresponding to an event $\mathcal{E}$.

We first obtain an upper bound on $A$ in Section~\ref{sec:nosamples}, followed by an upper bound on $B$ in Section~\ref{sec:morethanonesample}. Finally, we combine the two bounds to obtain an overall bound on $R_{n}^{S}$. \\


\subsubsection{Upper bound on $A$}
\label{sec:nosamples}

Recall that $N_{j,t-1}$ denotes the number of occurrences of symbol $j$ in
$y_1,\dots,y_{t-1}$, and hence    $N_{j, t-1} \sim \textnormal{Bin}(t-1, p(j))$. Also, if $N_{j, t-1} = 0$, then we have $p_{t}(j) = \frac{m\beta}{t-1+m\beta}\frac{1}{m}$. Thus, we have
\begin{align*}
    E_{y_{\tau}|_{\tau = 1}^{t-1}} \big[\mathbb{I}(N_{j, t-1} = 0) r_{t}(j)\big] = \Bigg[\textnormal{log} \frac{(t-1+m\beta) p(j)}{\beta} \\ + k\textnormal{log} \left(\frac{\textnormal{log} \big(\frac{t-1+m\beta}{\beta} + c\big)}{\textnormal{log} \left(\frac{1}{p (j)} + c\right)}\right)\Bigg](1-p(j))^{t-1},
\end{align*}
which gives
\begin{align*}
   &A = 
    \sum_{t=1}^{n} \sum_{j=1}^{m} p(j) \bigg[\textnormal{log}\bigg(\frac{t-1}{\beta}  +m\bigg) + \log p(j)\\ &+ k\textnormal{loglog} \left(\frac{t-1}{\beta} + m+c\right) - k \textnormal{loglog}\left(\frac{1}{p(j)}+c\right)\bigg](1-p(j))^{t-1}
\end{align*}
\begin{equation}\begin{split}
     \leq \sum_{t=1}^{n} \Bigg[\textnormal{log}\bigg(\frac{t-1}{\beta} + & m \bigg) - H(p) + k\textnormal{loglog} \left(\frac{t-1}{\beta} + m+c\right) \\ & - k \textnormal{loglog}\left(\frac{1}{p_{max}}+c\right)\Bigg]\{1-p_{min}\}^{t-1}\end{split} 
     \end{equation}
where $p_{max} = \underset{j \in [m]}{\textnormal{max}}\ p(j)$, and $p_{min} = \underset{j \in [m]}{\textnormal{min}}\ p(j)$.\\

\subsubsection{Upper bound on $B$}
\label{sec:morethanonesample}
\begin{flushleft} Recall that 
$p_{t}(j) = \frac{m\beta}{t-1+m\beta}\frac{1}{m} + \frac{t-1}{t-1+m\beta}\bar{p}_{j, t-1}$ and note that if $N_{j, t-1} \geq 1$, then $\bar{p}_{j, t-1} > 0$. We have 
\end{flushleft}
\begin{align*}
    \mathbb{I}(N_{j, t-1} \geq 1) r_{t}(j) = \mathbb{I}(N_{j, t-1} \geq 1)\Bigg[\textnormal{log} \frac{p(j)}{p_{t} (j)} \\ + k \textnormal{log} \left(\frac{\textnormal{log} \left(\frac{1}{p_{t} (j)} + c\right)}{\textnormal{log} \Big(\frac{1}{p(j)} + c\Big)}\right)\Bigg]\end{align*} 
\begin{align*}
    = \mathbb{I}(N_{j, t-1} \geq 1)\Bigg[-\textnormal{log} \frac{p_{t}(j)}{p(j)} \\ + k \textnormal{log} \Bigg( 1 + \frac{\textnormal{log} \big(\frac{1}{p_{t} (j)} + c\big) - \textnormal{log} \big(\frac{1}{p(j)} + c\big)}{\textnormal{log} \big(\frac{1}{p(j)} + c\big)}\Bigg)\Bigg]
\end{align*}\\

\begin{align*}
    \overset{(a)}{\leq} \mathbb{I}(\bar{p}_{j, t-1} > 0) \Bigg[- \frac{m\beta}{t-1+m\beta} \textnormal{log}\frac{1}{mp(j)} \\ - \frac{t-1}{t-1+m\beta}\textnormal{log}\frac{\bar{p}_{j,t-1}}{p(j)} \\ +  k\textnormal{log} \Bigg( 1 + \frac{\textnormal{log} \big(\frac{1}{p_{t} (j)} + c\big) - \textnormal{log} \big(\frac{1}{p(j)} + c\big)}{\textnormal{log} \big(\frac{1}{p(j)} + c\big)}\Bigg)\Bigg]
\end{align*}\\

\begin{align*}
    \overset{(b)}{\leq} \mathbb{I}(\bar{p}_{j, t-1} > 0)\Bigg[-\frac{m\beta}{t-1+m\beta} \ \textnormal{log}\frac{1}{mp(j)} \\ - \frac{t-1}{t-1+m\beta} \textnormal{log}\frac{\bar{p}_{j,t-1}}{p(j)} + \frac{k}{{\color{red}\textnormal{ln}} \left(\frac{1}{p(j)} + c\right)} \textnormal{log} \left(\frac{\frac{1}{p_{t} (j)} + c}{\frac{1}{p(j)} + c}\right)\Bigg]
\end{align*}
\begin{align*}
    = \mathbb{I}(\bar{p}_{j, t-1} > 0)\Bigg[- \frac{m\beta}{t-1+m\beta} \textnormal{log}\frac{1}{mp(j)} \\ - \frac{t-1}{t-1+m\beta} \textnormal{log}\frac{\bar{p}_{j,t-1}}{p(j)} \\ + \frac{k}{\textnormal{ln}\left(\frac{1}{p(j)} + c\right)}\textnormal{log}\left(1 + \frac{\frac{1}{p_{t} (j)} - \frac{1}{p(j)}}{\frac{1}{p(j)}+c}\right)\Bigg]
\end{align*}
\begin{align*}
    \overset{(c)}{\leq} \mathbb{I}(\bar{p}_{j, t-1} > 0)\Bigg[ - \frac{m\beta}{t-1+m\beta} \textnormal{log}\frac{1}{mp(j)} \\ - \frac{t-1}{t-1+m\beta}\textnormal{log}\frac{\bar{p}_{j,t-1}}{p(j)} + \frac{k}{(\ln 2)\ \textnormal{ln}\left(\frac{1}{p(j)} + c\right)} \frac{\frac{1}{p_{t} (j)} - \frac{1}{p(j)}}{\frac{1}{p(j)}+c}\Bigg]
\end{align*}

\begin{align*}
    \overset{(d)}{\leq} \mathbb{I}(\bar{p}_{j, t-1} > 0) \bigg[-\frac{m\beta}{t-1+m\beta} \textnormal{log}\frac{1}{mp(j)} - \frac{t-1}{t-1+m\beta} \textnormal{log}\frac{\bar{p}_{j,t-1}}{p(j)} \\ + \frac{k}{(\ln 2)\ \textnormal{ln} \left(\frac{1}{p(j)} + c\right)}\ \frac{1}{\frac{1}{p(j)} + c}\bigg(\frac{m\beta}{t-1+m\beta}m \\ + \frac{t-1}{t-1+m\beta}\ \frac{1}{\bar{p}_{j,t-1}} - \frac{1}{p(j)}\bigg)\bigg]
\end{align*}
\begin{align*}
    = \mathbb{I}(\bar{p}_{j, t-1} > 0) \Bigg[-\frac{m\beta}{t-1+m\beta} \textnormal{log}\frac{1}{mp(j)} - \frac{t-1}{t-1+m\beta} \textnormal{log}\frac{\bar{p}_{j,t-1}}{p(j)} \\ + \frac{k}{(\ln 2)\ \textnormal{ln} \left(\frac{1}{p(j)} + c\right)}\ \frac{1}{\frac{1}{p(j)} + c}\bigg\{\frac{m\beta}{t-1+m\beta} \left(m - \frac{1}{p(j)}\right) \\ + \frac{t-1}{t-1+m\beta}\left(\frac{1}{\bar{p}_{j,t-1}} - \frac{1}{p(j)}\right)\Bigg\}\Bigg]
\end{align*}
\begin{align*}
   = C + D   
\end{align*}
where 
$(a)$ follows from convexity of $-\log $($\cdot$), and Jensen's inequality gives $-\textnormal{log} \frac{p_{t}(j)}{p(j)} \leq - \frac{m\beta}{t-1+m\beta} \textnormal{log}\frac{1}{mp(j)} - \frac{t-1}{t-1+m\beta}\textnormal{log}\frac{\bar{p}_{j,t-1}}{p(j)}$; $(b)$ and $(c)$ result from $\ln(1+x) \leq x$ for $x > -1$ and so $\log_{2} (1+x) \leq \frac{x}{\ln 2}$; and $(d)$ follows since $\frac{1}{p_{t}(j)} \leq \frac{m\beta}{t-1+m\beta}m + \frac{t-1}{t-1+m\beta}\frac{1}{\bar{p}_{j,t-1}}$ from the convexity of $\frac{1}{x}$ for positive $x$; and 
\begin{align*}
     &C =  \mathbb{I}(\bar{p}_{j, t-1} > 0)\bigg[-\frac{m\beta}{t-1+m\beta} \textnormal{log}\frac{1}{mp(j)} \\ &\!\!\!\!\!\!+ \frac{k}{(\ln 2)\ \textnormal{ln} \left(\frac{1}{p(j)} + c\right)}\ \frac{1}{\frac{1}{p(j)} + c}\ \frac{m\beta}{t-1+m\beta} \left(m - \frac{1}{p(j)}\right)\bigg], \\  &D =  \mathbb{I}(\bar{p}_{j, t-1} > 0)\bigg[- \frac{t-1}{t-1+m\beta} \textnormal{log}\frac{\bar{p}_{j,t-1}}{p(j)} \\ &\!\!\!\!\!\!+ \frac{k}{(\ln 2)\ \textnormal{ln} \left(\frac{1}{p(j)} + c\right)}\ \frac{1}{\frac{1}{p(j)} + c}\ \frac{t-1}{t-1+m\beta}\left(\frac{1}{\bar{p}_{j,t-1}} - \frac{1}{p(j)}\right)\bigg].
\end{align*}

In the next few steps, we bound $E_{y_{\tau}|_{\tau = 1}^{t-1}} [\mathbb{I}(N_{j, t-1} \geq 1) r_{t}(j)]$ by analyzing $E_{y_{\tau}|_{\tau = 1}^{t-1}} [C]$ and $E_{y_{\tau}|_{\tau = 1}^{t-1}} [D]$  separately.\\ 
\begin{align*}
    E_{y_{\tau}|_{\tau = 1}^{t-1}} [C] &\leq E_{y_{\tau}|_{\tau = 1}^{t-1}} \Bigg[\mathbb{I}(\bar{p}_{j, t-1} > 0) \Bigg(\frac{m\beta}{t-1+m\beta}  \textnormal{log}\ m \\ &+ \frac{k}{(\ln 2)\ \textnormal{ln} \left(\frac{1}{p(j)} + c\right)}\ \frac{1}{\frac{1}{p(j)} + c}\ \frac{m\beta}{t-1+m\beta} m \Bigg)\Bigg]\\
    &\leq \frac{m\beta}{t-1+m\beta} \textnormal{log} \ m \\ &+ \frac{k}{(\ln 2)\ \textnormal{ln} \left(\frac{1}{p(j)} + c\right)}\ \frac{1}{\frac{1}{p(j)} + c}\ \frac{m^{2}\beta}{t-1+m\beta} .
\end{align*}
 %

\begin{equation}\begin{split}
    E_{y_{\tau}|_{\tau = 1}^{t-1}} & [D] = \frac{t-1}{t-1+m\beta}\ E_{y_{\tau}|_{\tau = 1}^{t-1}} \Bigg[\mathbb{I}(\bar{p}_{j, t-1} > 0) \bigg\{-\textnormal{log}\frac{\bar{p}_{j,t-1}}{p(j)} \\ & + \frac{k}{(\ln 2)\ \textnormal{ln} \left(\frac{1}{p(j)} + c\right)}\ \frac{1}{\frac{1}{p(j)} + c}\left(\frac{1}{\bar{p}_{j,t-1}} - \frac{1}{p(j)}\right)\bigg\}\Bigg]\end{split}
\end{equation}\\

From Lemmas 4 and 5 of \cite{b3}, we obtain the following:
\begin{equation}\begin{split}
    & E_{y_{\tau}|_{\tau = 1}^{t-1}} \bigg[-\mathbb{I}(\bar{p}_{j, t-1} > 0)\ \textnormal{log}\frac{\bar{p}_{j,t-1}}{p(j)}\bigg] \\ & \leq 2e^{-\frac{(t-1)p(j)}{10}} \textnormal{log} (t-1) + \frac{4}{(\ln 2)(t-1)p(j)}\end{split}
\end{equation}
More precisely, Lemma 4 of \cite{b2} gives
\begin{align*}
    - E_{y_{\tau}|_{\tau = 1}^{t-1}} \Bigg[\mathbb{I}\left(\left|\frac{\bar{p}_{j, t-1} - p(j)}{p(j)}\right|\geq\frac{1}{2}\right) \mathbb{I}\left(\bar{p}_{j, t-1} > 0\right) {\color{red}\log\bar{p}_{j, t-1}}\Bigg] \\ \leq \mathbb{P}\left(\left|\frac{\bar{p}_{j, t-1} - p(j)}{p(j)}\right|\geq\frac{1}{2}\right) \textnormal{log} (t-1)
    \\ \leq 2e^{-\frac{(t-1)p(j)}{10}}\textnormal{log} (t-1)
\end{align*}
and Lemma 5 of \cite{b2} gives
\begin{align*}
    -E_{y_{\tau}|_{\tau = 1}^{t-1}} \Bigg[\mathbb{I}\left(\left|\frac{\bar{p}_{j, t-1} - p(j)}{p(j)}\right| < \frac{1}{2}\right) \textnormal{log}\frac{\bar{p}_{j, t-1}}{p(j)}\Bigg] \\ \leq \frac{4}{(\ln 2)(t-1)p(j)}
\end{align*}
Next, we calculate $E_{y_{\tau}|_{\tau = 1}^{t-1}}\bigg[\mathbb{I}(\bar{p}_{j, t-1} > 0) \bigg(\frac{1}{\bar{p}_{j, t-1}} -\frac{1}{p(j)}\bigg)\bigg]$ in the subsequent steps.

\begin{lemma}
    For real $|x| \leq  1-d,  d > 0$,\begin{equation}
     \frac{1}{1+x}\leq 1 - x + \frac{1}{d^{3}}x^{2}
\end{equation}
\end{lemma}
\begin{proof}
    The above result follows from Taylor's theorem : \begin{equation*}
    f(x) = f(0) + f^{'}(0)\ x + \frac{f^{''}(b)}{2}\ x^{2}
\end{equation*} where $b \in \left(0, x\right)\ \textnormal{for}\ x > 0, \textnormal{and} \ b \in \left(x, 0\right)\ \textnormal{for} \ x < 0$.
 Thus,
\begin{equation*}
       \frac{1}{1+x} = 1 - x + \frac{1}{(1+b)^{3}}x^{2}
    \end{equation*}
\end{proof}
\begin{align*}
    E_{y_{\tau}|_{\tau = 1}^{t-1}}\bigg[ \mathbb{I}(\bar{p}_{j, t-1} > 0) \bigg(\frac{1}{\bar{p}_{j, t-1}} -\frac{1}{p(j)}\bigg)\bigg] \\ = E_{y_{\tau}|_{\tau = 1}^{t-1}}\Bigg[ \mathbb{I}(\bar{p}_{j, t-1} > 0)\Bigg\{ \mathbb{I}\left(\left|\frac{\bar{p}_{j, t-1} - p(j)}{p(j)}\right| \geq \frac{1}{2}\right) \\ + \mathbb{I}\left(\left|\frac{\bar{p}_{j, t-1} - p(j)}{p(j)}\right| < \frac{1}{2}\right)\Bigg\}\left(\frac{1}{\bar{p}_{j, t-1}} -\frac{1}{p(j)}\right)\Bigg]
\end{align*}
\begin{equation}\begin{split}
    = E_{y_{\tau}|_{\tau = 1}^{t-1}}\Bigg[\mathbb{I}(\bar{p}_{j, t-1} > 0)\Bigg\{ \mathbb{I}\left(\left|\frac{\bar{p}_{j, t-1} - p(j)}{p(j)}\right| \geq \frac{1}{2}\right)\left(\frac{t-1}{N_{j, t-1}} - \frac{1}{p(j)}\right) 
    \\ + \mathbb{I}\left(\left|\frac{\bar{p}_{j, t-1} - p(j)}{p(j)}\right| < \frac{1}{2}\right) \frac{1}{p(j)}\left\{\left(1 + \frac{\bar{p}_{j, t-1} - p(j)}{p(j)}\right)^{-1} - 1 \right\}\Bigg\}\Bigg]\end{split}
\end{equation}

\begin{align*}
    \textnormal{Note that}\ \ \mathbb{I}(\bar{p}_{j, t-1} > 0) \ \mathbb{I}\left(\left|\frac{\bar{p}_{j, t-1} - p(j)}{p(j)}\right| \geq \frac{1}{2}\right) \\ = \mathbb{I}\bigg(0 < N_{j, t-1} \leq \frac{1}{2}(t-1)p(j)\bigg) \\ + \mathbb{I}\left(\textnormal{min}\left\{\frac{3}{2}(t-1)p(j), t-1\right\} \leq N_{j, t-1} \leq t-1\right)
\end{align*}

and $\frac{t-1}{N_{j,t-1}}-\frac{1}{p(j)}$ is a monotonically decreasing function in $N_{j, t-1}$, where $N_{j, t-1} \in [t-1]$. Also,\\
\begin{align*} \mathbb{I}\bigg(\textnormal{min}\left\{\frac{3}{2}(t-1)p(j), t-1\right\} \leq N_{j, t-1} \leq 
t-1\bigg)\bigg(\frac{t-1}{N_{j, t-1}}-\frac{1}{p(j)}\bigg)\\ \leq 0, \hspace{10mm}\\ \textnormal{since}\ \frac{t-1}{N_{j, t-1}}-\frac{1}{p(j)}\bigg|_{\frac{3}{2}(t-1)p(j)} = -\frac{1}{3p(j)} < 0, \\ \textnormal{and}\ \frac{t-1}{N_{j, t-1}}-\frac{1}{p(j)}\bigg|_{t-1} = 1 - \frac{1}{p(j)} \leq 0
\end{align*}
Therefore, \begin{align*}
    \mathbb{I}(\bar{p}_{j, t-1} > 0) \ \mathbb{I}\left(\left|\frac{\bar{p}_{j, t-1} - p(j)}{p(j)}\right| \geq \frac{1}{2}\right)\bigg(\frac{t-1}{N_{j, t-1}}-\frac{1}{p(j)}\bigg) \\ \leq \mathbb{I}\bigg(0 < N_{j, t-1} \leq \frac{1}{2}(t-1)p(j)\bigg)\bigg(t-1-\frac{1}{p(j)}\bigg)
\end{align*}

\begin{align*}
 \overset{(e)}{\leq} E_{y_{\tau}|_{\tau = 1}^{t-1}} \Bigg[\mathbb{I}\bigg(0 < N_{j, t-1} \leq \frac{1}{2}(t-1
)p(j)\bigg) \left(t-1 - \frac{1}{p(j)}\right) \\ + \mathbb{I}\left(\left|\frac{\bar{p}_{j, t-1} - p(j)}{p(j)}\right| < \frac{1}{2}\right) \frac{1}{p(j)}\left\{\left(1 + \frac{\bar{p}_{j, t-1} - p(j)}{p(j)}\right)^{-1} - 1 \right\}\Bigg]
\end{align*}
\begin{align*}
 \overset{(f)}{\leq} E_{y_{\tau}|_{\tau = 1}^{t-1}} \Bigg[\mathbb{I}\bigg(0 < N_{j, t-1} \leq \frac{1}{2}(t-1
)p(j)\bigg) \left(t-1 - \frac{1}{p(j)}\right) \\ + \mathbb{I}\left(\left|\frac{\bar{p}_{j, t-1} - p(j)}{p(j)}\right| < \frac{1}{2}\right) \frac{1}{p(j)}\bigg\{1 - \frac{\bar{p}_{j, t-1} - p(j)}{p(j)} \\ + 8\left(\frac{\bar{p}_{j, t-1} - p(j)}{p(j)}\right)^{2} - 1 \bigg\}\Bigg]
\end{align*}\\
where $(e)$ consequently follows from (23), and $(f)$ is obtained using $\frac{1}{1+x}\leq 1 - x + 8x^{2}$ for $|x| < \frac{1}{2}$, which follows from Lemma 1.
Using Chernoff bound, $\mathbb{P}(N_{j, t-1} \leq \frac{1}{2}(t-1)p(j))\leq e^{-\frac{(t-1)p(j)}{8}}$.
\begin{align*}\leq \left(t-1-\frac{1}{p(j)}\right)e^{-\frac{(t-1)p(j)}{8}} 
 \\ + E_{y_{\tau}|_{\tau = 1}^{t-1}}\Bigg[\frac{1}{p(j)}\left\{- \frac{\bar{p}_{j, t-1} - p(j)}{p(j)} + 8\left(\frac{\bar{p}_{j, t-1} - p(j)}{p(j)}\right)^{2}\right\}\Bigg]
\end{align*}\\
\begin{align*}
    N_{j, t-1} \sim \textnormal{Bin} (t-1, p(j)) \implies E_{y_{\tau}|_{\tau = 1}^{t-1}}\Big[\frac{\bar{p}_{j, t-1} - p(j)}{p(j)}\Big] \\ = 0, \textnormal{and}\ E_{y_{\tau}|_{\tau = 1}^{t-1}}\Big[\frac{\bar{p}_{j, t-1} - p(j)}{p(j)}\Big]^{2} = \frac{1-p(j)}{(t-1)p(j)}
\end{align*}
Therefore, we get
\begin{equation}\begin{split}
    & E_{y_{\tau}|_{\tau = 1}^{t-1}}\bigg[ \mathbb{I}(\bar{p}_{j, t-1} > 0) \bigg(\frac{1}{\bar{p}_{j, t-1}} -\frac{1}{p(j)}\bigg)\bigg] \\ & \leq \left(t-1-\frac{1}{p(j)}\right)e^{-\frac{(t-1)p(j)}{8}} + \frac{1}{p(j)} \frac{8(1-p(j))}{(t-1)p(j)}\end{split}
\end{equation}\\

Using the results of (21) and (24) in (20), we get 
\begin{align*}
    E_{y_{\tau}|_{\tau = 1}^{t-1}} [D]\leq \frac{2(t-1)}{t-1+m\beta}e^{-\frac{(t-1)p(j)}{10}}  \textnormal{log} (t-1) \\ + \frac{4}{(\ln 2)(t-1+m\beta)p(j)} + \frac{k}{(\ln 2)\ \textnormal{ln} \left(\frac{1}{p(j)} + c\right)}\ \frac{1}{\frac{1}{p(j)} + c} \\ \bigg[\frac{(t-1)\big(t-1-\frac{1}{p(j)}\big)}{t-1+m\beta}e^{-\frac{(t-1)p(j)}{8}} + \frac{1}{p^{2}(j)}\ \frac{8(1-p(j))}{t-1+m\beta}\bigg]
\end{align*}\\

Therefore,
\begin{equation}
    \begin{split}
         & E_{y_{\tau}|_{\tau = 1}^{t-1}}[\mathbb{I}\left(N_{j, t-1} \geq 1\right) r_{t}(j)] = E_{y_{\tau}|_{\tau = 1}^{t-1}}[C] + E_{y_{\tau}|_{\tau = 1}^{t-1}}[D] \\ & \leq \frac{m\beta\ \textnormal{log} \ m}{t-1+m\beta} + \frac{k}{(\ln 2)\ \textnormal{ln} \left(\frac{1}{p(j)} + c\right)} \frac{1}{\frac{1}{p(j)} + c} \bigg[\frac{m^{2}\beta}{t-1+m\beta} \\ & + \frac{(t-1)\big(t-1-\frac{1}{p(j)}\big)}{t-1+m\beta}e^{-\frac{(t-1)p(j)}{8}} + \frac{1}{p^{2}(j)}\ \frac{8(1-p(j))}{t-1+m\beta}\bigg] \\ & + \frac{2(t-1)}{t-1+m\beta}e^{-\frac{(t-1)p(j)}{10}} \textnormal{log} (t-1) + \frac{4}{(\ln 2)(t-1+m\beta)p(j)}
    \end{split}
\end{equation}
    \begin{equation*}
    \textnormal{Let}\ l = 
    \left(\frac{1}{p_{max}} + c\right) \textnormal{ln}\left(\frac{1}{p_{max}} + c\right), 
    \textnormal{and}\ s = 1 - m\beta.
\end{equation*}
Then, 
\begin{align*}
  E_{y_{\tau}|_{\tau = 1}^{t-1}} [ \mathbb{I}\left(N_{j, t-1} \geq 1\right) r_{t}(j)] \leq \ \frac{m\beta}{t-s} \ \textnormal{log}\ m \\ + \frac{k}{(\ln 2)\ l(t-s)} \bigg[m^{2}\beta + (t-1)^{2}
  e^{-\frac{(t-1)p(j)}{8}} + \frac{8}{p^{2}(j)}\bigg] \\ + \frac{2(t-1)}{t-s}e^{-\frac{(t-1)p(j)}{10}} \textnormal{log}(t-1) + \frac{4}{(\textnormal{ln 2})
  (t-s)p(j)}
\end{align*}

where $t-1-\frac{1}{p(j)} < t-1$, and $1 - p(j) < 1$ were used to obtain the above inequality.
\begin{equation}\
    =  \frac{1}{t-s}\bigg[m\beta\ \textnormal{log}\ m + \frac{4}{(\ln 2)\ p(j)} + \frac{k}{(\ln 2)\ l} \bigg(m^{2}\beta  + \frac{8}{p^{2}(j)}\bigg)\bigg] + E
\end{equation}
where 
\begin{equation}
    E =  \frac{t-1}{t-s}\bigg[\frac{k}{(\ln 2)}(t-1)
  e^{-\frac{(t-1)p(j)}{8}} + 2e^{-\frac{(t-1)p(j)}{10}} \textnormal{log}(t-1)\bigg]
\end{equation}\\
Therefore,
\begin{equation}\begin{split}
     &  B = \sum_{t=1}^{n} \sum_{j=1}^{m}p(j) E_{y_{\tau}|_{\tau = 1}^{t-1}} \ [\mathbb{I}\left(N_{j, t-1} \geq 1\right) r_{t}(j)] \leq \sum_{t=1}^{n}\sum_{j=1}^{m}p(j)E\\ & + \left(m\beta \ \textnormal{log} \ m + \frac{4m}{\ln 2} + \frac{k}{(\ln 2)\ l}\left(m^{2}\beta + \frac{8m}{ p_{min}}\right)\right)\sum_{t=2}^{n} \frac{1}{t-s} \\ & 
\end{split}\end{equation}

\begin{equation}\begin{split}
    & \textnormal{and}\ \sum_{t=1}^{n} \sum_{j=1}^{m}p(j) E_{y_{\tau}|_{\tau = 1}^{t-1}} \ [r_{t}(j)] = A + B \leq A + \sum_{t=1}^{n}\sum_{j=1}^{m}p(j)E \\ & + \left(m\beta \ \textnormal{log} \ m + \frac{4m}{\ln 2} + \frac{k}{(\ln 2)\ l}\left(m^{2}\beta + \frac{8m}{ p_{min}}\right)\right)\sum_{t=2}^{n} \frac{1}{t-1+\lfloor m\beta \rfloor}\end{split}
\end{equation}
\begin{equation}\begin{split}
\leq \left(m\beta \ \textnormal{log} \ m + \frac{4m}{\ln 2} + \frac{k}{(\ln 2)\ l}\left(m^{2}\beta + \frac{8m}{ p_{min}}\right)\right) \ln \bigg(& \frac{n-1}{\lfloor m\beta \rfloor}+1\bigg) \\ &  + O(1)
\end{split}
\end{equation}
where the above inequality results from 
\begin{equation*}
    A + \sum_{t=1}^{n}\sum_{j=1}^{m}p(j)E = O(1)
\end{equation*}
\end{proof}
}

\subsection{Proof of Theorem~\ref{Thm2}}
\begin{proof}
The loss in round $t \in [n]$ is given by $l(p_{t}, y_{t}) = f(p_{t}(y_{t}))$. This can also be expressed as $f(<p_t, e_t>)$ with the adversary (oracle) choosing a binary $m$-length vector $e_{t}$ defined as
\begin{align*}
e_{t}(j) =
  \begin{cases}
    1 \ \ \ \ \ \ \ \ \text{if} \ y_{t} = j\\
    0 \ \ \ \ \ \ \ \ \text{if} \ y_{t} \neq j
  \end{cases}
\end{align*} 
to generate symbol $y_{t} \in \{1, 2, \cdots m\}$. Thus, demonstrating the exp-concavity of the loss function $l(\cdot, y_t)$ is equivalent to showing that the function $f(x) = \log\frac{1}{x} + k \log\log(\frac{1}{x} + c)$ is exp-concave for $x \in (0,1)$. To do this, we study the concavity of the function
\begin{align*}
g(x) = e^{-\alpha f(x)} 
&= \exp\!\left(-\alpha \left(\log \frac{1}{x} + k \log \log \left(\frac{1}{x} + c\right)\right)\right)\\
&= x^{\alpha \log_2 e}\left(\log \left(\frac{1}{x} + c\right)\right)^{-\alpha k \log_2 e}.
\end{align*}
\remove{
Then
\begin{align*}
g(x)
&= \exp\!\left(\alpha \log x - \alpha k \log \log \left(\frac{1}{x} + c\right)\right) \\

\end{align*}
}
Let \(a = \alpha \log_2 e\). Then
\begin{equation}
g(x)
=
\left\{
\frac{x}{\left(\log \left(\frac{1}{x} + c\right)\right)^k}
\right\}^{a}.
\end{equation}

We study concavity of \(g(x)\) over \(x \in (0,1)\). Below are the expressions for the first and second derivative of $g(x)$.
\begin{equation}
g'(x)
=
\frac{a x^{a-1}}{\left(\log \left(\frac{1}{x} + c\right)\right)^{ka}}
+
\frac{k a x^{a-2}}
{(\ln 2)\left(\frac{1}{x} + c\right)
\left(\log \left(\frac{1}{x} + c\right)\right)^{ka+1}}.
\end{equation}

\begin{align*}
g''(x)
&=
\frac{a(a-1)x^{a-2}}
{\left(\log \left(\frac{1}{x} + c\right)\right)^{ka}} \\
&\quad+
\frac{2ka(a-1)x^{a-3}}
{(\ln 2)\left(\frac{1}{x} + c\right)
\left(\log \left(\frac{1}{x} + c\right)\right)^{ka+1}} \\
&\quad+
\frac{ka x^{a-4}}
{(\ln 2)\left(\frac{1}{x} + c\right)^2
\left(\log \left(\frac{1}{x} + c\right)\right)^{ka+1}} \\
&\quad+
\frac{ka(ka+1)x^{a-4}}
{(\ln 2)^2\left(\frac{1}{x} + c\right)^2
\left(\log \left(\frac{1}{x} + c\right)\right)^{ka+2}}.
\end{align*}

Rearranging, we have
\begin{equation}
g''(x)
=
\frac{aG(x)}
{x^{4-a}(\ln 2)^2\left(\frac{1}{x} + c\right)^2
\left(\log \left(\frac{1}{x} + c\right)\right)^{ka+2}},
\end{equation}
where
\begin{align*}
G(x)
&=
k^{2}a + k + k\ln\left(\frac{1}{x}+c\right) \\
&-
(1-a)(1+xc)
\Big[
(1+xc)\left(\ln\left(\frac{1}{x}+c\right)\right)^2 \\
&\hspace{1.7in}+ 2k\ln\left(\frac{1}{x}+c\right)
\Big].
\end{align*}

Since the denominator is positive,
\begin{equation*}
g''(x) \leq 0 \iff G(x) \leq 0.
\end{equation*}
which in turn implies that the requirement for $g(x)$ being concave is given by 
\begin{align*}
k^{2}a + k + k\ln \left(\frac{1}{x}+c\right)
&\leq
(1-a)(1+xc)^2\left(\ln\left(\frac{1}{x}+c\right)\right)^2 \\
&\quad+
2k(1-a)(1+xc)\ln\left(\frac{1}{x}+c\right), 
\end{align*}
or equivalently, we need that for all $x \in (0,1]$, 
\begin{equation}
a \leq
\frac{(1+xc)^2(\ln(\frac{1}{x}+c))^2
+ k(1+2xc)\ln(\frac{1}{x}+c) - k}
{(1+xc)^2(\ln(\frac{1}{x}+c))^2
+ 2k(1+xc)\ln(\frac{1}{x}+c) + k^2}.
\label{Eqn:requirement}
\end{equation}
Recall that we defined 
\begin{equation*} 
    a^{*}\! = \!\underset{x \in (0, 1]}{\textnormal{min}}\ \frac{(1+xc)^{2}(\ln(\frac{1}{x}+c))^{2} + k(1+2xc)\ln(\frac{1}{x}+c) - k}{(1+xc)^{2}(\ln(\frac{1}{x}+c))^{2} + 2k(1+xc)\ln(\frac{1}{x}+c)+k^{2}}.
\end{equation*} 
Here \(a^* \in (0,1]\), and since the requirement in \eqref{Eqn:requirement} is clearly satisfied for all $a \le a^*$, we have that the function \(f(x)\) is \(a / \log_2 e\)-exp-concave for all \(a \le a^*\).

Given the complex description of $a^*$, next we provide a simpler value $a^{**}$ which is smaller than $a^*$. Thus, the function $f(x)$ is also exp-concave for all $a \le a^{**}$. We start with the r.h.s. in \eqref{Eqn:requirement} and note that 
\begin{align*}
&\frac{(1+xc)^2(\ln(\frac{1}{x}+c))^2
+ k(1+xc)\ln(\frac{1}{x}+c) - k}
{(1+xc)^2(\ln(\frac{1}{x}+c))^2
+ 2k(1+xc)\ln(\frac{1}{x}+c) + k^2} \\
&=
1 - \frac{k((1+xc)\ln(\frac{1}{x}+c) + k) + k}
{((1+xc)\ln(\frac{1}{x}+c) + k)^2} \\
&=
1 - \frac{k}{(1+xc)\ln(\frac{1}{x}+c) + k}
- \frac{k}{((1+xc)\ln(\frac{1}{x}+c) + k)^2}.
\end{align*}

Thus,
\begin{equation}
a^*
\ge
1 - \frac{k}{\gamma(c)+k}
- \frac{k}{(\gamma(c)+k)^2}
= a^{**},
\end{equation}
where recall that $\gamma(c) = \underset{x \in (0, 1]}{\textnormal{min}}\left[(1+xc)\ln\big(\frac{1}{x}+c\big)\right]$. 
\remove{
Next, we provide an upper bound on the gap between the two constants $a^*$ and $a^{**}$. We have
\begin{align*}
a^* - a^{**}
&{\color{red}=}
\min_{x \in (0,1]}
\frac{kxc\ln(\frac{1}{x}+c)}
{\big((1+xc)\ln(\frac{1}{x}+c) + k\big)^2} \\
&\le
\min_{x}
\frac{1}{
\frac{xc\ln(\frac{1}{x}+c)}{k}
+ \frac{k}{xc\ln(\frac{1}{x}+c)} + 2}
\le \frac{1}{4}
\end{align*}
where the last inequality follows from the AM-GM inequality. 
}

Next, we  show an explicit lower bound on $a^{**}$ (and thus $a^*$ as well). Note that for \(c \ge 1\), we have 
\begin{align*}
\gamma(c) &= \underset{x \in (0, 1]}{\textnormal{min}}(1+xc)\ln\bigg(\frac{1}{x}+c\bigg)\\
&\ge
\min_{x \in (0,1]} (1+x)\ln\left(\frac{1}{x}+1\right)
= 2\ln 2.
\end{align*}
\remove{
\begin{equation*}
(1+xc)\ln\left(\frac{1}{x}+c\right)
\ge
(1+x)\ln\left(\frac{1}{x}+1\right),
\end{equation*}
so that
}
Thus, 
\begin{align*}
a^{**}
&=
1 - \frac{k}{\gamma(c)+k}
\left(1 + \frac{1}{\gamma(c)+k}\right) \\
&\ge
1 - \frac{k}{2\ln 2 + k}
\left(1 + \frac{1}{2\ln 2 + k}\right) \\
&=
\frac{(2\ln 2)^2 + (2\ln 2 - 1)k}
{(2\ln 2 + k)^2}.
\end{align*}
Next we want to argue that 
\begin{equation}
\frac{(2\ln 2)^2 + (2\ln 2 - 1)k}
{(2\ln 2 + k)^2} \ge \frac{(2\ln 2 - 1)}{k}, 
\end{equation}
or equivalently that 
\begin{align*}
 \frac{k\big((2\ln 2)^2 + (2\ln 2 - 1)k\big) - (2\ln 2 - 1)(2\ln 2 + k)^2}
{k(2\ln 2 + k)^2} &\ge 0
\end{align*}
Thus, it is enough to show that the numerator is nonnegative. Expanding it, we get
\begin{align}
&k\big((2\ln 2)^2 + (2\ln 2 - 1)k\big) - (2\ln 2 - 1)(2\ln 2 + k)^2 \nonumber\\
&= k(2\ln 2)^2 + (2\ln 2 - 1)k^2 \nonumber\\
&\quad - (2\ln 2 - 1)\big((2\ln 2)^2 + 2(2\ln 2)k + k^2\big) \nonumber\\
&= k(2\ln 2)^2 - 2(2\ln 2)(2\ln 2 - 1)k - (2\ln 2 - 1)(2\ln 2)^2 \nonumber\\
&= (2\ln 2)\Big( (2-2\ln 2)k - (2\ln 2)(2\ln 2 - 1)\Big).
\end{align}
Hence it suffices to verify that
\begin{equation}
(2-2\ln 2)k - (2\ln 2)(2\ln 2 - 1) \ge 0.
\end{equation}
Since
\begin{equation}
\frac{(2\ln 2)(2\ln 2 - 1)}{2-2\ln 2} < 1,
\end{equation}
the above inequality holds for every $k \ge 1$. Therefore,
\begin{equation}
\frac{(2\ln 2)^2 + (2\ln 2 - 1)k}
{(2\ln 2 + k)^2} \ge \frac{(2\ln 2 - 1)}{k}.
\end{equation}
It follows that
\begin{equation}
a^{*} \ge \frac{2\ln 2 - 1}{k}.
\end{equation}

\remove{
{\color{red}
Thus \(a = O\!\left(\frac{1}{k}\right)\), and moreover
\begin{equation}
a^{**}
\ge
\left(\frac{2\ln 2}{2\ln 2 + k}\right)^2 > 0,
\end{equation}
which implies $a^* > \tfrac{1}{4}$.}
}
\end{proof}

\remove{
\subsection{Proof of Theorem 2}
\begin{proof}
The loss in round $t \in [n]$ is given by $l(p_{t}, y_{t}) = f(p_{t}(y_{t}))$. This can also be expressed as $f(<p_t, e_t>)$ with the adversary (oracle) choosing a binary $m$-length vector $e_{t}$ defined as
\begin{align*}
e_{t}(j) =
  \begin{cases}
    1 \ \ \ \ \ \ \ \ \text{if} \ y_{t} = j\\
    0 \ \ \ \ \ \ \ \ \text{if} \ y_{t} \neq j
  \end{cases}
\end{align*} 
to generate symbol $y_{t} \in \{1, 2, \cdots m\}$.  

\begin{equation*}
    g(x) = e^{-\alpha f(x)} = e^{-\alpha \left(\textnormal{log}  \frac{1}{x} + k \textnormal{log}\textnormal{log} \left(\frac{1}{x} + c\right)\right)}
\end{equation*}
\begin{equation*}
= e^{\alpha \textnormal{log} x - \alpha k \textnormal{log}\textnormal{log} \left(\frac{1}{x} + c\right)}
\end{equation*}
\begin{equation*}
= \left(e^{ \textnormal{ln} x}\right)^{\alpha  \textnormal{log}  e}\left(e^{ \textnormal{ln} \textnormal{log} \left(\frac{1}{x} + c\right)}\right)^{-\alpha k \textnormal{log} e}
\end{equation*}
\begin{equation}
    = \left\{\frac{x}{\left(\textnormal{log} \left(\frac{1}{x} + c\right)\right)^{k}}\right\}^{a}, \ \textnormal{where}  \ a = \alpha  \textnormal{log}_{2} e
\end{equation}\\

We obtain that $g(x)$ is concave over $x \in (0, 1]$ for specific values of $a$.
\begin{equation} 
    g'(x) = \frac{a \ x^{a-1}}{\left\{\textnormal{log} \left(\frac{1}{x} + c\right)\right\}^{ka}} + \frac{k \ a \ x^{a-2}}{(\textnormal{ln\ 2})\left(\frac{1}{x} + c\right)\left\{\textnormal{log} \left(\frac{1}{x} + c\right)\right\}^{ka+1}}
\end{equation}
\begin{align*}
    g''(x) = \frac{a(a-1) \ x^{a-2}}{\left\{\textnormal{log} \left(\frac{1}{x} + c\right)\right\}^{ka}} + \frac{2ka(a-1) \ x^{a-3}}{(\textnormal{ln\ 2})\left(\frac{1}{x} + c\right)\left\{\textnormal{log} \left(\frac{1}{x} + c\right)\right\}^{ka+1}} \\   \hspace{8mm} + \frac{ka \ x^{a-4}}{(\textnormal{ln\ 2})\left(\frac{1}{x} + c\right)^{2}\left\{\textnormal{log} \left(\frac{1}{x} + c\right)\right\}^{ka+1}} \\ \hspace{8mm} + \frac{ka(ka+1) \ x^{a-4}}{(\textnormal{ln\ 2})^{2}\left(\frac{1}{x} + c\right)^{2}\left\{\textnormal{log} \left(\frac{1}{x} + c\right)\right\}^{ka+2}}
\end{align*} 
\begin{align}
     = \frac{aG}{x^{4-a}(\textnormal{ln\ 2})^{2}\left(\frac{1}{x} + c\right)^{2}\left\{\textnormal{log} \left(\frac{1}{x} + c\right)\right\}^{ka+2}}
\end{align}
\begin{align*}
    \textnormal{where}\ G = \left\{k^{2}a + k + k\ \textnormal{ln}\left(\frac{1}{x}+c\right)\right\} - \hspace{3mm}\\ (1-a)(1+xc) \left\{\left(\textnormal{ln}\left(\frac{1}{x}+c\right)\right)^{2}(1+xc) + 2 k\ \textnormal{ln}\left(\frac{1}{x}+c\right)\right\}
\end{align*}
\begin{equation*}
   \textnormal{Then},\ g''(x) \leq 0 \iff G \leq 0 
\end{equation*}\\
\begin{align*}
    G \leq 0 \implies k^{2}a + k + k\ln \left(\frac{1}{x}+c\right) \\ \leq \left(1-a\right)(1+xc)^{2}\left(\textnormal{ln}\left(\frac{1}{x}+c\right)\right)^{2} \\ + 2k\left(1-a\right)(1+xc)\ \textnormal{ln}\left(\frac{1}{x}+c\right)
\end{align*}
\begin{equation}
    \implies a \leq \frac{(1+xc)^{2}(\ln(\frac{1}{x}+c))^{2} + k(1+2xc)\ln(\frac{1}{x}+c) - k}{(1+xc)^{2}(\ln(\frac{1}{x}+c))^{2} + 2k(1+xc)\ln(\frac{1}{x}+c)+k^{2}}
\end{equation}
Note that the RHS lies in (0,1] for any $x \in (0, 1]$. Therefore, $a^{*} \in (0, 1]$. 

For $\alpha$-exp-concavity of $f$, we need values of $a$ that satisfy $G \leq 0 \ \forall \ x \in (0, 1]$. Therefore, the above result proves the main statement of the theorem that $f(x)$ is $\frac{a}{\log_
{2} e}$-exp-concave $ \forall \ a \in (0, a^{*}]$.\\ \\

The subsequent steps follow from simplifying the RHS of the above inequality, which is greater than the following expression
\begin{align*}
    \frac{(1+xc)^{2}(\ln(\frac{1}{x}+c))^{2} + k(1+xc)\ln(\frac{1}{x}+c) - k}{(1+xc)^{2}(\ln(\frac{1}{x}+c))^{2} + 2k(1+xc)\ln(\frac{1}{x}+c)+k^{2}} 
\end{align*}
\begin{equation*}
    = 1 - \frac{k((1+xc)\ln(\frac{1}{x}+c) + k) + k}{((1+xc)\ln(\frac{1}{x}+c) + k)^{2}}
\end{equation*}
\begin{equation}
    = 1 - \frac{k}{(1+xc)\ln(\frac{1}{x}+c) + k} - \frac{k}{((1+xc)\ln(\frac{1}{x}+c) + k)^{2}}
\end{equation}
\begin{equation}
 \geq 1 - \frac{k}{\gamma(c)+k} - \frac{k}{(\gamma(c)+k)^{2}} = a^{**}
\end{equation}
Minimizing over $x \in (0, 1]$, we get $a^{*} \geq a^{**}$.\\ \\

The following calculations show that $a^{*} - a^{**} \leq \frac{1}{4}$.
\begin{align*}
    a^{*} - a^{**} = \underset{x \in (0, 1]}{\textnormal{min}}\frac{kxc\ln(\frac{1}{x}+c)}{((1+xc)\ln(\frac{1}{x}+c) + k)^{2}} \\ \leq \underset{x \in (0, 1]}{\textnormal{min}}\frac{kxc\ln(\frac{1}{x}+c)}{(xc\ln(\frac{1}{x}+c) + k)^{2}} \\ = \underset{x \in (0, 1]}{\textnormal{min}}\frac{1}{\frac{xc\ln(\frac{1}{x}+c)}{k} + \frac{k}{xc\ln(\frac{1}{x}+c)} + 2} \\ \leq \frac{1}{2+2}
\end{align*}
The last step follows from AM-GM inequality.\\ \\

We note that $(1+xc)\ln\big(\frac{1}{x}+c\big)$ is an increasing function in $c$. Therefore, for $c \geq 1$,
\begin{equation*}
    (1+xc)\ln\left(\frac{1}{x}+c\right) \geq (1+x)\ln\left(\frac{1}{x}+1\right)
\end{equation*}
\begin{equation*}
    \implies \gamma(c) \geq \underset{x \in (0, 1]}{\textnormal{min}}(1+x)\ln\left(\frac{1}{x}+1\right) = 2\ln2
\end{equation*}
\remove{\begin{equation}
    \implies \gamma_{1}^{2}(c) + k\gamma_{1}(c) > k \ \forall \ k \geq 0, c \geq 1
\end{equation}}

\begin{equation*}
    a^{**} = 1 - \frac{k}{\gamma(c)+k}\left(1 + \frac{1}{\gamma(c)+k}\right)
\end{equation*}
\begin{equation*}
    \geq 1 - \frac{k}{2\ln 2+k}\left(1 + \frac{1}{2\ln 2+k}\right)
\end{equation*}
\begin{equation}
    = \frac{(2\ln 2)^{2} +(2\ln 2 - 1)k}{(2\ln 2 + k)^{2}}
\end{equation}
This shows that $a = O(\frac{1}{k})$.
\begin{equation}
    \geq \left(\frac{2\ln 2}{2\ln 2 + k}\right)^{2} > 0
\end{equation}
    Therefore, $a^{**} > 0$, and $a^{*} > \frac{1}{4}$.
\end{proof}
}

\end{document}